\newcommand{\norm}[1]{\left\lVert#1\right\rVert}
\newtheorem{definition}{Definition}
\newtheorem{lemma}{Lemma}
\newtheorem{theorem}{Theorem}
\newtheorem*{theorem*}{Theorem}
\newtheorem{proposition}{Proposition}
\DeclareMathOperator{\sign}{sign}
\DeclareMathOperator*{\argmax}{argmax}
\DeclareMathOperator*{\Risk}{Risk}
\DeclareMathOperator*{\advRisk}{Risk_{\alpha}}
\DeclareMathOperator*{\PCadvRisk}{PC-Risk_{\alpha}}
\DeclareMathOperator*{\B}{B(\alpha)}
\DeclareMathOperator*{\probmap}{M}
\DeclareMathOperator*{\EoT}{EoT}
\newcounter{resetdummycounter}
\begin{document}

\title{Theoretical evidence for adversarial robustness through randomization}

%

\author{%
  Rafael Pinot$^{1,2}$\quad
  Laurent Meunier$^{1,3}$\quad
  Alexandre Araujo$^{1,4}$\quad \\
  \textbf{Hisashi Kashima}$^{5,6}$\quad
  \textbf{Florian Yger}$^{1}$\quad
  \textbf{C\'edric Gouy-Pailler}$^{2}$\quad
  \textbf{Jamal Atif}$^{1}$\quad \\ \\
  $^1$Université Paris-Dauphine,\ PSL Research University,\ CNRS,\ LAMSADE,\ Paris,\ France \\
  $^2$Institut LIST,\ CEA,\ Universit\'e Paris-Saclay $\quad$ 
  $^3$Facebook AI Research, Paris, France \\
  $^4$Wavestone, Paris, France $\quad$ 
  $^5$Kyoto University,\ Kyoto,\ Japan $\quad$ 
  $^6$RIKEN Center for AIP,\ Japan
}

    \doparttoc 
    \faketableofcontents 
    
    \part{} 
    \vspace{-1cm}


\maketitle

\vspace{-0.5cm}
\begin{abstract}
    This paper investigates the theory of robustness against adversarial attacks. It focuses on the family of randomization techniques that consist in injecting noise in the network at inference time. These techniques have proven effective in many contexts, but lack theoretical arguments. We close this gap by presenting a theoretical analysis of these approaches, hence explaining why they perform well in practice. More precisely, we make two  new contributions. The first one relates the randomization rate to robustness to adversarial attacks. This result applies for the general family of exponential distributions, and thus extends and unifies the previous approaches. The second contribution consists in devising a new upper bound on the adversarial generalization gap of randomized neural networks. We support our theoretical claims with a set of experiments.
\end{abstract}

\section{Introduction}
Adversarial attacks are some of the most puzzling and burning issues in modern machine learning. An adversarial attack refers to a small, imperceptible change of an input maliciously designed to fool the result of a machine learning algorithm. Since the seminal work of~\cite{Szegedy2013IntriguingPO} exhibiting this intriguing phenomenon in the context of deep learning, a wealth of results have been published on designing attacks~\cite{goodfellow2014explaining,Papernot2016TheLO,moosavi2016deepfool,kurakin2016adversarial,carlini2017towards,moosavi2017universal} and defenses~\cite{goodfellow2014explaining,papernot2016distillation,guo2017countering,meng2017magnet,Samangouei2018DefenseGAN,madry2017towards}), or on trying to understand the very nature of this phenomenon~\cite{fawzi2018empirical,simon2018adversarial,NIPS2018Fawzi,Moosavi2016Robustnessofaclassifier}. Most methods remain unsuccessful to defend against powerful adversaries~\cite{carlini2017towards,madry2018towards,athalye2018obfuscated}. Among the defense strategies, randomization has proven effective in some contexts. It consists in injecting random noise (both during training and inference phases) inside the network architecture, {\em i.e.} at a given layer of the network. Noise can be drawn either from Gaussian~\cite{Xuang2018,lecuyer2018certified,rakin2018parametricnoiseinjection}, Laplace~\cite{lecuyer2018certified}, Uniform~\cite{Xie2017MitigatingAE}, or Multinomial~\cite{pruningDefenseICLR2018} distributions. Remarkably, most of the considered distributions belong to the Exponential family. Albeit these significant efforts, several theoretical questions remain unanswered. Among these, we tackle the following, for which we provide principled and theoretically-founded answers:
\begin{itemize}
    \vspace{-0.1cm}
    \item[\textbf{Q1:}] To what extent does a noise drawn from the Exponential family preserve robustness\\ (in a sense to be defined) to adversarial attacks?
    \vspace{-0.1cm}
\end{itemize}
\noindent\textbf{A1:} We introduce a definition of robustness to adversarial attacks that is suitable to the randomization defense mechanism. As this mechanism can be  described as a non-deterministic querying process, called probabilistic mapping in the sequel, we propose a formal definition of robustness relying on a metric/divergence between probability measures. A key question arises then about the appropriate metric/divergence for our context. This requires tools for comparing divergences w.r.t. the introduced robustness definition. Renyi divergence turned out to be a measure of choice, since it satisfies most of the desired properties  (coherence, strength, and computational tractability). Finally, thanks to the existing links between the Renyi divergence and the Exponential family, we were able to prove  that methods based on noise injection from the Exponential family  ensures robustness to adversarial examples (cf Theorem~\ref{thm:netrob}) . 
\begin{itemize}
\vspace{-0.1cm}
    \item[\textbf{Q2:}] Can we guarantee a good accuracy under attack for classifiers defended with this\\ kind of noise? 
\vspace{-0.1cm}
\end{itemize}
\noindent\textbf{A2:} We present an upper bound on  the drop of accuracy (under attack) of the methods defended with noise drawn from the Exponential family (cf. Theorem~\ref{thm:bound}). Then, we illustrate this result by training different randomized models with Laplace and Gaussian distributions on CIFAR10/CIFAR100. These experiments highlight the trade-off between accuracy and robustness that depends on the amount of noise one injects in the network. Our theoretical and experimental conclusion is that randomized defenses are competitive (with the current state-of-the-art~\cite{madry2018towards}) given the intensity of noise injected in the network. 


\noindent \textbf{Outline of the paper:} We present in Section~\ref{section::relatedwork} the related work on randomized defenses to adversarial examples. Section~\ref{section::definition} introduces the definition of robustness relying on a metric/divergence between probability measures, and discusses the key role of the Renyi divergence. We state in Section~\ref{sec:main_result} our main results on the robustness and accuracy of Exponential family-based defenses. Section~\ref{section::experiment} presents extensive experiments supporting our theoretical findings. Section~\ref{section::conclusion} provides concluding remarks.
\section{Related works}
\label{section::relatedwork}



Injecting noise into algorithms to improve their robustness has been used for ages in detection and signal processing tasks~\cite{ZozoA99,ChapR04,MitaK98}. It has also been extensively studied in several machine learning and optimization fields, {\em e.g.}~robust optimization~\cite{ben2009robust} and data augmentation techniques~\cite{Perez2017TheEO}. Recently, noise injection techniques have been adopted by the adversarial defense community, especially for neural networks, with very promising results. Randomization techniques are generally oriented towards one of the following objectives: experimental robustness or provable robustness.

\textbf{Experimental robustness:} The first technique explicitly using randomization at inference time as a defense appeared during the 2017 NIPS defense challenge~\cite{Xie2017MitigatingAE}. This method uniformly samples over geometric transformations of the image to select a substitute image to feed the network. Then~\cite{pruningDefenseICLR2018} proposed to use stochastic activation pruning based on a multinomial distribution for adversarial defense. Several papers~\cite{Xuang2018,rakin2018parametricnoiseinjection} propose to inject Gaussian noise directly on the activation of selected layers both at training and inference time. While these works hypothesize that noise injection makes the network robust to adversarial perturbations, they do not provide any formal justification on the nature of the noise they use or on the loss of accuracy/robustness of the  network. \\
\textbf{Provable robustness:} In~\cite{lecuyer2018certified}, the authors proposed a randomization method by exploiting the link between differential privacy~\cite{dwork2014algorithmic} and adversarial robustness. Their framework, called ``randomized smoothing'' \footnote{Name introduced in~\cite{KolterRandomizedSmoothing} which came later than~\cite{lecuyer2018certified}.}, inherits some theoretical results from the differential privacy community allowing them to evaluate the level of accuracy under attack of their method. Initial results from~\cite{lecuyer2018certified} have been refined in~\cite{SecondOrdercertifiedrobustness}, and~\cite{KolterRandomizedSmoothing}. Our work belongs to this line of research. However, our framework does not treat exactly the same class of defenses. Notably, we provide theoretical arguments supporting the defense strategy based on randomization techniques relying on the exponential family, and derive a new bound on the adversarial generalization gap, which completes the results obtained so far on certified robustness. Furthermore, our focus is on the network randomized by noise injection, ``randomized smoothing'' instead uses this network to create a \emph{new} classifier robust to attacks.

Since the initial discovery of adversarial examples, a wealth of non randomized defense approaches have also been proposed, inspired by various machine learning domains such as adversarial training~\cite{goodfellow2014explaining,madry2017towards}, image reconstruction~\cite{meng2017magnet,Samangouei2018DefenseGAN} or robust learning~\cite{goodfellow2014explaining,madry2017towards}. Even if these methods have their own merits, a thorough evaluation made by~\cite{athalye2018obfuscated} shows that most defenses can be easily broken with known powerful attacks~\cite{madry2017towards,carlini2017towards,chen2018ead}. Adversarial training, which consists in training a model directly on adversarial examples, came out as the best defense in average. Defense based on randomization could be overcome by the Expectation Over Transformation technique proposed by~\cite{athalye2017synthesizing} which consists in taking the expectation over the network to craft the perturbation. In this paper, to ensure that our results are not biased by obfuscated gradients, we follow the principles of~\cite{athalye2018obfuscated,carlini2019evaluating} and evaluate our randomized networks with this technique. We show that randomized defenses are still competitive given the intensity of noise injected in the network.



\section{General definitions of risk and robustness}
\label{section::definition}
\subsection{Risk, robustness and probabilistic mappings}
Let us consider two spaces $\mathcal{X}$ (with norm $\norm{.}_{\mathcal{X}}$), and $\mathcal{Y}$. We consider the classification task that seeks a hypothesis (classifier) $h: \mathcal{X} \rightarrow \mathcal{Y}$ minimizing the risk of $h$ w.r.t. some ground-truth distribution $\mathcal{D}$ over $\mathcal{X}\times\mathcal{Y}$. The risk of $h$ w.r.t $\mathcal{D}$ is defined as 
\begin{align*}
    \Risk(h):= \mathbb{E}_{(x,y)\sim \mathcal{D}}\left[ \mathds{1} \left( h(x) \neq y \right)\right].
\end{align*}
Given a classifier $h: \mathcal{X} \rightarrow \mathcal{Y}$, and some input $x \in \mathcal{X}$ with true label $y_{true} \in \mathcal{Y}$, to generate an adversarial example, the adversary seeks a $\tau$ such that $h(x+\tau) \neq y_{true}$, with some budget $\alpha$ over the perturbation (\emph{i.e} with $\norm{\tau}_{\mathcal{X}} \leq\alpha$). $\alpha$ represents the maximum amount of perturbation one can add to $x$ without being spotted (the perturbation remains humanly imperceptible). 
The overall goal of the adversary is to find a perturbation crafting strategy that both maximizes the risk of $h$, and keeps the values of $\norm{\tau}_{\mathcal{X}}$ small. To measure this risk "under attack" we define the notion of adversarial $\alpha$-radius risk of $h$ w.r.t. $\mathcal{D}$ as follows
\begin{align*}
\advRisk(h):= \mathbb{E}_{(x,y)\sim \mathcal{D}}\left[ \sup_{\norm{\tau}_{\mathcal{X}} \leq \alpha} \mathds{1}\left(h(x+\tau) \neq y\right) \right]\enspace.
\end{align*}

In practice, the adversary does not have any access to the ground-truth distribution. The literature proposed several surrogate versions of $\advRisk(h)$ (see~\cite{NIPS2018Mahloujifar} for more details) to overcome this issue. We focus our analysis on the one used in {\em e.g}~\cite{Szegedy2013IntriguingPO}, or~\cite{NIPS2018Fawzi} denoted $\alpha$-radius prediction-change risk of $h$ w.r.t. $\mathcal{D}_{\mathcal{X}}$ (marginal of $\mathcal{D}$ for $\mathcal{X}$), and defined as   
\begin{align*}
    \PCadvRisk(h):= \mathbb{P}_{x\sim \mathcal{D}_{\mathcal{X}}}\left[\exists \tau \in \B \text{ s.t. } h(x+\tau)\neq h(x) \right]
\end{align*}
where for any $\alpha \geq 0$, \quad $\B :=\{\tau \in \mathcal{X} \text{ s.t. } \norm{\tau}_{\mathcal{X}} \leq \alpha\}\enspace.$

As we will inject some noise in our classifier in order to defend against adversarial attacks, we need to introduce the notion of ``probabilistic mapping''. Let $\mathcal{Y}$ be the output space, and $\mathcal{F}_{\mathcal{Y}}$ a $\sigma$-$ algebra$ over $\mathcal{Y}$. Let us also denote $\mathcal{P}(\mathcal{Y})$ the set of probability measures over $(\mathcal{Y},\mathcal{F}_{\mathcal{Y}})$.

\begin{definition}[Probabilistic mapping] Let $\mathcal{X}$ be an arbitrary space, and $(\mathcal{Y},\mathcal{F}_{\mathcal{Y}})$ a measurable space. A \emph{probabilistic mapping} from $\mathcal{X}$ to $\mathcal{Y}$ is a mapping $\probmap: \mathcal{X} \to \mathcal{P}(\mathcal{Y})$.
To obtain a numerical output out of this \emph{probabilistic mapping}, one needs to sample $y$ according to $\probmap(x)$. 
\end{definition} 

This definition does not depend on the nature of $\mathcal{Y}$ as long as $(\mathcal{Y},\mathcal{F}_{\mathcal{Y}})$ is measurable. In that sense, $\mathcal{Y}$ could be either the label space or any intermediate space corresponding to the output of an arbitrary hidden layer of a neural network. Moreover, any mapping can be considered as a probabilistic mapping, whether it explicitly injects noise (as in~\cite{lecuyer2018certified,rakin2018parametricnoiseinjection,pruningDefenseICLR2018}) or not. In fact, any deterministic mapping can be considered as a probabilistic mapping, since it can be characterized by a Dirac measure. Accordingly, the definition of a probabilistic mapping is fully general and equally treats networks with or without noise injection. There exists no definition of robustness against adversarial attacks that comply with the notion of probabilistic mappings. We settle that by generalizing the notion of prediction-change risk initially introduced in~\cite{NIPS2018Mahloujifar} for deterministic classifiers. Let $\probmap$ be a probabilistic mapping from $\mathcal{X}$ to $\mathcal{Y}$, and $d_{\mathcal{P}(\mathcal{Y})}$ some metric/divergence on $\mathcal{P}(\mathcal{Y})$. We define the $(\alpha,\epsilon)$-radius prediction-change risk of $\probmap$ w.r.t. $\mathcal{D}_{\mathcal{X}}$ and $d_{\mathcal{P}(\mathcal{Y})}$ as 
$$\PCadvRisk(\probmap,\epsilon):=  \mathbb{P}_{x\sim \mathcal{D}_{\mathcal{X}}}\left[ \exists \tau \in B(\alpha) \text{ s.t. } d_{\mathcal{P}(\mathcal{Y})}(\probmap(x+\tau),\probmap(x)) > \epsilon \right] \enspace.$$

These three generalized notions allow us to analyze noise injection defense mechanisms (Theorems~\ref{thm:netrob}, and~\ref{thm:bound}). We can also define adversarial robustness (and later adversarial gap) thanks to these notions. 

\begin{definition}[Adversarial robustness]
\label{def::GeneralizedRobustness}
Let $d_{\mathcal{P}(\mathcal{Y})}$ be a metric/divergence on $\mathcal{P}(\mathcal{Y})$. The probabilistic mapping $\probmap$ is said to be
 $d_{\mathcal{P}(\mathcal{Y})}$-$(\alpha, \epsilon, \gamma)$ robust if
$\PCadvRisk(\probmap,\epsilon) \leq \gamma.$ 
\end{definition}


It is difficult in general to show that a classifier is $d_{\mathcal{P}(\mathcal{Y})}$-$(\alpha, \epsilon, \gamma)$ robust. However, we can  derive some bounds for particular divergences that will ensure robustness up to a certain level (Theorem~\ref{thm:netrob}). It is worth noting that our definition of robustness depends
on the considered metric/divergence between probability measures. Lemma~\ref{th::PropimpliesRobustness} gives some insights on the monotony of the robustness according to the parameters, and the probability metric/divergence at hand.

\begin{lemma}
\label{th::PropimpliesRobustness}Let $\probmap$ be a probabilistic mapping, and
let  $d_{1}$ and $d_{2}$ be two metrics on $\mathcal{P}(\mathcal{Y})$.
If there exists a non decreasing function $ \phi: \mathbb{R} \to \mathbb{R}$ such that  $\forall \mu_1,\mu_2 \in \mathcal{P}(\mathcal{Y})$, $d_{1}(\mu_1,\mu_2) \leq \phi(d_{2}(\mu_1,\mu_2)) $, then the following assertion holds: 
$$\probmap \text{ is } d_{2}\text{-}(\alpha, \epsilon, \gamma)\text{-robust} \implies \probmap \text{ is }d_{1}\text{-}(\alpha, \phi(\epsilon), \gamma)\text{-robust}$$
\end{lemma}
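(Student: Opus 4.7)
The plan is to argue directly at the level of events and then push the inequality through to the probabilities defining $\PCadvRisk$. Concretely, I would fix $x \in \mathcal{X}$ and $\tau \in B(\alpha)$ and show the pointwise implication
\[
d_{1}(\probmap(x+\tau),\probmap(x)) > \phi(\epsilon) \;\Longrightarrow\; d_{2}(\probmap(x+\tau),\probmap(x)) > \epsilon.
\]
This is where the hypothesis is used: if instead $d_{2}(\probmap(x+\tau),\probmap(x)) \leq \epsilon$, then by monotonicity of $\phi$ we have $\phi(d_{2}(\probmap(x+\tau),\probmap(x))) \leq \phi(\epsilon)$, and combining with the assumed domination $d_1 \leq \phi \circ d_2$ yields $d_{1}(\probmap(x+\tau),\probmap(x)) \leq \phi(\epsilon)$, contradicting the hypothesis. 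So the contrapositive gives exactly the pointwise implication above.

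Taking existential quantifiers over $\tau \in B(\alpha)$ on both sides preserves the implication, so the event
\[
A_1(x) := \{\exists \tau \in B(\alpha),\ d_1(\probmap(x+\tau),\probmap(x)) > \phi(\epsilon)\}
\]
is contained in
\[
A_2(x) := \{\exists \tau \in B(\alpha),\ d_2(\probmap(x+\tau),\probmap(x)) > \epsilon\}.
\]
By monotonicity of the probability measure $\mathcal{D}_{\mathcal{X}}$, this inclusion yields
\[
\mathbb{P}_{x \sim \mathcal{D}_{\mathcal{X}}}[A_1(x)] \;\leq\; \mathbb{P}_{x \sim \mathcal{D}_{\mathcal{X}}}[A_2(x)],
\]
i.e., the $(\alpha,\phi(\epsilon))$-radius prediction-change risk of $\probmap$ under $d_1$ is at most the $(\alpha,\epsilon)$-radius prediction-change risk under $d_2$. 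Combined with the assumption that the latter is bounded by $\gamma$, we conclude that $\probmap$ is $d_1$-$(\alpha,\phi(\epsilon),\gamma)$ robust, which is the desired conclusion.

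There is no real obstacle in this proof; the only subtle point is correctly invoking monotonicity of $\phi$ in the contrapositive direction (one needs $\phi$ non-decreasing rather than strictly increasing, and $\phi(\epsilon)$ is the right threshold, not $\phi^{-1}(\epsilon)$). A minor measurability remark can be added if needed: if the sets $A_1(x),A_2(x)$ are not a priori measurable as subsets of $\mathcal{X}$, one can replace the probabilities by outer probabilities, or note that the same inclusion argument applies between outer measures and the conclusion is unaffected.
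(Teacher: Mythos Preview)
Your proof is correct and follows essentially the same approach as the paper: both argue pointwise that $d_2(\probmap(x),\probmap(x+\tau)) \le \epsilon$ together with $d_1 \le \phi\circ d_2$ and monotonicity of $\phi$ force $d_1(\probmap(x),\probmap(x+\tau)) \le \phi(\epsilon)$, then pass to the corresponding events and compare the prediction-change risks. Your write-up is in fact more explicit (working via the contrapositive and the inclusion $A_1(x)\subseteq A_2(x)$), and your measurability remark is a reasonable aside that the paper omits.
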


As suggested in Definition~\ref{def::GeneralizedRobustness} and Lemma~\ref{th::PropimpliesRobustness}, any given choice of metric/divergence will instantiate a particular notion of adversarial robustness and it should be carefully selected. 

\subsection{On the choice of the metric/divergence for robustness}
\label{subsec:div}


The aforementioned formulation naturally raises the question of the choice of the metric used to defend against adversarial attacks. 
The main notions that govern the selection of an appropriate metric/divergence are  \emph{coherence}, \emph{strength}, and \emph{computational tractability}. A metric/divergence is said to be coherent if it naturally fits the task at hand ({\em e.g.} classification tasks are intrinsically linked to discrete/trivial metrics, conversely to regression tasks). The strength of a metric/divergence refers to its ability to cover (dominate) a wide class of others in the sense of Lemma~\ref{th::PropimpliesRobustness}. 
In the following, we will focus on both the total variation metric and the Renyi divergence, that we consider as respectively the most coherent with the classification task using probabilistic mappings, and the strongest divergence. We first discuss how total variation metric is \emph{coherent} with randomized classifiers but suffers from computational issues. Hopefully, the Renyi divergence provides good guarantees about adversarial robustness, enjoys nice \emph{computational properties}, in particular when considering  Exponential family distributions, and is \emph{strong} enough to dominate a wide range of metrics/divergences including total variation.


 Let  $\mu_1$ and $\mu_2$ be two measures in $\mathcal{P}(\mathcal{Y})$, both dominated by a third measure $\nu$. The trivial distance $ d_{T}(\mu_1,\mu_2):= \mathds{1}\left(\mu_1 \neq \mu_2\right)$ is the simplest distance one can define between $\mu_1$ and $\mu_2$. In the deterministic case, it is straightforward to compute (since the numerical output of the algorithm characterizes its associated measure), but this is not the case in general. In fact one might not have access to the true distribution of the mapping, but just to the numerical outputs. Therefore, one needs to consider more sophisticated metrics/divergences, such as the total variation distance $ d_{TV}(\mu_1,\mu_2):= \sup_{Y \in \mathcal{F}_{\mathcal{Y}}} |\mu_1 (Y) - \mu_2(Y)|.$ The total variation distance is one of the most broadly used probability metrics. It admits several very simple interpretations, and is a very useful tool in many mathematical fields such as probability theory, Bayesian statistics, coupling or transportation theory. In transportation theory, it can be rewritten as the solution of the Monge-Kantorovich problem with the cost function $c(y_1,y_2) =\mathds{1}\left(y_1 \neq y_2\right)$:
$ \inf\int_{\mathcal{Y}^{2}}\mathds{1}\left(y_1 \neq y_2\right) d\pi(y_1,y_2)\, ,$
where the infimum is taken over all joint probability measures $\pi$ on $(\mathcal{Y}\times \mathcal{Y}, \mathcal{F}_{\mathcal{Y} } \otimes \mathcal{F}_{\mathcal{Y}})$ with marginals $\mu_1$ and $\mu_2$. According to this interpretation, it seems quite natural to consider the total variation distance as a relaxation of the trivial distance on $[0,1]$ (see~\cite{villani2008optimal} for details). In the deterministic case, the total variation and the trivial distance coincides. In general, the total variation allows a finer analysis of the probabilistic mappings than the trivial distance. But it suffers from a high computational complexity. In the following of the paper we will show how to ensure robustness regarding TV distance.

Finally, denoting by $g_1$ and $g_2$ the respective probability distributions w.r.t. $\nu$,  the Renyi divergence of order $\lambda$~\cite{renyi1961} writes as  $d_{R,\lambda}(\mu_1,\mu_2):=\frac{1}{\lambda -1}\log \int_{\mathcal{Y}} g_2(y)  \left(\frac{g_1(y)}{g_2(y)}\right)^{\lambda} d\nu(y).$
The Renyi divergence is a generalized measure defined on the interval $(1,\infty)$, where it equals the Kullback-Leibler divergence when $\lambda \rightarrow 1$ (that will be denoted $d_{KL}$), and the maximum divergence when $\lambda \rightarrow \infty$. It also has the very special property of being non decreasing w.r.t. $\lambda$. This divergence is very common in machine learning, especially in its Kullback-Leibler form as it is widely used as the loss function (cross entropy) of classification algorithms. It enjoys the desired properties  since it bounds the TV distance, and is tractable.  Furthermore, Proposition~\ref{prop:RobustTV} proves that Renyi-robustness implies TV-robustness, making it a suitable surrogate for the trivial distance. 

\begin{proposition}[Renyi-robustness implies TV-robustness]
\label{prop:RobustTV}
Let $\probmap$ be a probabilistic mapping, then $\forall\lambda\geq1$:
$$\probmap \text{ is }  d_{R,\lambda}\text{-}(\alpha, \epsilon, \gamma)\text{-robust} \implies \probmap \text{ is } d_{TV}\text{-}(\alpha, \epsilon', \gamma)\text{-robust}$$
$$\textnormal{ with } \epsilon' = \min \left(\frac{3}{2}\left(\sqrt{1 + \frac{4\epsilon}{9}} - 1\right)^{1/2}, \frac{\exp(\epsilon +1) -1}{\exp(\epsilon +1) +1}\right) \enspace.$$

\end{proposition}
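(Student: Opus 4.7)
The plan is to reduce the claim, via Lemma~\ref{th::PropimpliesRobustness}, to producing a non-decreasing function $\phi : \mathbb{R}_{\geq 0}\to \mathbb{R}_{\geq 0}$ satisfying
$d_{TV}(\mu_1,\mu_2) \leq \phi\bigl(d_{R,\lambda}(\mu_1,\mu_2)\bigr)$
for all $\mu_1,\mu_2 \in \mathcal{P}(\mathcal{Y})$ and all $\lambda \geq 1$. The $\min$ in the stated $\epsilon'$ reflects two such comparisons, each sharp in a different regime, combined using that the minimum of two non-decreasing functions is non-decreasing.

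The first step is to exploit a classical monotonicity fact: the Renyi divergence is non-decreasing in $\lambda$ and satisfies $d_{R,1} = d_{KL}$, so any Renyi bound $d_{R,\lambda}\leq \epsilon$ immediately yields $d_{KL}\leq \epsilon$. I then invoke a refined Pinsker inequality of Gilardoni type, namely $d_{KL}(\mu_1,\mu_2) \geq 2\,d_{TV}^2 + \tfrac{4}{9}\,d_{TV}^4$, and solve the resulting bi-quadratic inequality for $d_{TV}^2$. Setting $x=d_{TV}^2$, the constraint $\tfrac{4}{9}x^2 + 2x - \epsilon \leq 0$ gives, by the standard quadratic formula, $x \leq \tfrac{9}{4}\bigl(\sqrt{1+4\epsilon/9}-1\bigr)$, which upon taking square roots is exactly the first expression inside the $\min$. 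This bound is informative for small $\epsilon$ and becomes vacuous once its right-hand side exceeds one.

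For the second term, I take the complementary route through the essential supremum of the likelihood ratio. Writing $L = dg_1/dg_2$, one has the identity $\mathbb{E}_{\mu_2}[L^\lambda] = \exp\bigl((\lambda-1)\,d_{R,\lambda}(\mu_1,\mu_2)\bigr)$, so a Markov-type argument applied at the threshold $t = e^{\epsilon+1}$ shows that, except on a set of exponentially small $\mu_2$-mass (controlled by letting $\lambda$ grow), one has $L \leq e^{\epsilon+1}$. This gives an approximate max-divergence/DP-style control at level $\eta = \epsilon+1$. Plugging this into the standard hypothesis-testing/coupling bound $d_{TV}(\mu_1,\mu_2) \leq (e^\eta-1)/(e^\eta+1)$ valid under such a pointwise likelihood-ratio control yields the second expression in the $\min$.

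Finally, applying Lemma~\ref{th::PropimpliesRobustness} with $\phi(\epsilon) = \min(\phi_1(\epsilon),\phi_2(\epsilon))$ transfers both comparison inequalities from pointwise divergence bounds to the level of $(\alpha,\epsilon,\gamma)$-robustness, closing the argument. I expect the main delicate point to be the second bound: producing the precise constant "$+1$" in $\exp(\epsilon+1)$ requires carefully choosing the Markov threshold and handling the vanishing tail as $\lambda\to\infty$. The first bound, by contrast, is essentially an algebraic inversion of a known refined Pinsker estimate.
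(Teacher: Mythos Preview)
Your overall architecture and your derivation of the first term in the $\min$ match the paper exactly: reduce to $d_{KL}\le \epsilon$ via monotonicity of the Renyi divergence in $\lambda$, invoke the refined Pinsker inequality $d_{KL}\ge 2d_{TV}^2+\tfrac{4}{9}d_{TV}^4$, invert the biquadratic, and conclude by Lemma~\ref{th::PropimpliesRobustness}. That part is fine.

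The gap is in your treatment of the second term. Your plan relies on a Markov bound for $L^\lambda$ and then on ``letting $\lambda$ grow'' to make the exceptional set negligible; but the hypothesis only gives $d_{R,\lambda}\le\epsilon$ for one fixed $\lambda$, and monotonicity of Renyi in the order goes the wrong way to upgrade this to larger $\lambda$. So you cannot send $\lambda\to\infty$. Moreover, even if you did obtain an (approximate) max-divergence control $L\le e^{\eta}$, the bound $d_{TV}\le (e^{\eta}-1)/(e^{\eta}+1)$ you quote is the one for \emph{two-sided} likelihood-ratio control; a one-sided bound on $L$ only yields $d_{TV}\le 1-e^{-\eta}$, and Renyi divergence is not symmetric, so the second side is not available. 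Finally, the specific shift ``$+1$'' is left unexplained by this route.

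The paper's argument for the second term is much simpler and stays entirely at the KL level already obtained. It uses Vajda's inequality
\[
d_{KL}(\mu_1,\mu_2)\ \ge\ \log\frac{1+d_{TV}}{1-d_{TV}}\ -\ \frac{2\,d_{TV}}{1+d_{TV}},
\]
bounds the subtracted term crudely by $1$ (valid since $d_{TV}\in[0,1]$), and then inverts $\;d_{KL}+1\ge \log\frac{1+d_{TV}}{1-d_{TV}}$ to get $d_{TV}\le \dfrac{e^{d_{KL}+1}-1}{e^{d_{KL}+1}+1}$. That is where the ``$+1$'' comes from. Replacing your Markov/max-divergence step by this Vajda-based inversion closes the argument.
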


A crucial property of Renyi-robustness is the \textit{Data processing inequality}. It is a well-known inequality from information theory which states that \textit{``post-processing cannot increase information''}~\cite{cover2012elements,beaudry2011intuitive}. In our case, if we consider a Renyi-robust probabilistic mapping, composing it with a deterministic mapping maintains Renyi-robustness with the same level.

\begin{proposition}[Data processing inequality]
\label{prop::postprocessing} 
Let us consider a probabilistic mapping $\probmap:\mathcal{X}\rightarrow\mathcal{P}(\mathcal{Y})$. Let us also denote $\rho:\mathcal{Y}\rightarrow\mathcal{Y}'$ a deterministic function.
If $U\sim \probmap(x)$ then the probability measure $M'(x)$ s.t $\rho(U) \sim M'(x)$ defines a probabilistic mapping $M':\mathcal{X}\rightarrow\mathcal{P}(\mathcal{Y}')$.

For any $\lambda>1$ if $\probmap$ is $d_{R,\lambda}$-$(\alpha,\epsilon,\gamma)$ robust then $M'$ is also is $d_{R,\lambda}$-$(\alpha,\epsilon,\gamma)$ robust.
\end{proposition}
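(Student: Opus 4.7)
The plan is to reduce the robustness statement about $M'$ to the known data processing inequality (DPI) for the Renyi divergence applied pointwise to each pair $(x, x+\tau)$, then conclude by a straightforward inclusion of events.

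First I would make explicit the relationship between $M'$ and $\probmap$. By construction, for every $x \in \mathcal{X}$, the measure $M'(x)$ is nothing but the pushforward $\rho_* \probmap(x)$, i.e.\ $M'(x)(A) = \probmap(x)(\rho^{-1}(A))$ for every measurable $A \subseteq \mathcal{Y}'$. In particular $M'$ is a well-defined probabilistic mapping (measurability of $x \mapsto M'(x)(A)$ follows from that of $x \mapsto \probmap(x)(\rho^{-1}(A))$).

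Next I would invoke the DPI for the Renyi divergence, which is a classical fact in information theory (as cited in the paper via \cite{cover2012elements,beaudry2011intuitive}): for any $\lambda > 1$, any two probability measures $\mu_1,\mu_2$ on $(\mathcal{Y},\mathcal{F}_{\mathcal{Y}})$, and any measurable $\rho:\mathcal{Y}\to\mathcal{Y}'$,
\begin{equation*}
d_{R,\lambda}(\rho_*\mu_1, \rho_*\mu_2) \;\leq\; d_{R,\lambda}(\mu_1,\mu_2).
\end{equation*}
(If a self-contained argument is preferred, this follows from Jensen's inequality applied to the convex map $t\mapsto t^{\lambda}$ via a disintegration of $\nu$ over the fibers of $\rho$, exactly as in the standard KL-divergence proof.) Applied pointwise with $\mu_1 = \probmap(x+\tau)$ and $\mu_2 = \probmap(x)$, this yields, for every $x \in \mathcal{X}$ and every $\tau \in B(\alpha)$,
\begin{equation*}
d_{R,\lambda}\bigl(M'(x+\tau), M'(x)\bigr) \;\leq\; d_{R,\lambda}\bigl(\probmap(x+\tau),\probmap(x)\bigr).
\end{equation*}

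Finally I would translate this pointwise inequality into an inclusion of the adversarial events that define $\PCadvRisk$: for any fixed $x$, if there exists $\tau \in B(\alpha)$ with $d_{R,\lambda}(M'(x+\tau),M'(x)) > \epsilon$, then the same $\tau$ witnesses $d_{R,\lambda}(\probmap(x+\tau),\probmap(x)) > \epsilon$. Taking probabilities under $x \sim \mathcal{D}_{\mathcal{X}}$ gives $\PCadvRisk(M',\epsilon) \le \PCadvRisk(\probmap,\epsilon) \le \gamma$, which is exactly $d_{R,\lambda}$-$(\alpha,\epsilon,\gamma)$ robustness of $M'$.

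The only real content is the DPI for Renyi divergence; everything else is bookkeeping. If the authors want a self-contained proof, the main (mild) obstacle is justifying the disintegration step used to reduce the pushforward DPI to Jensen's inequality on $t\mapsto t^\lambda$ — but under the standing assumption that $(\mathcal{Y},\mathcal{F}_{\mathcal{Y}})$ and $(\mathcal{Y}',\mathcal{F}_{\mathcal{Y}'})$ are standard measurable spaces (implicit in the use of probabilistic mappings throughout), this is routine and the cited references suffice.
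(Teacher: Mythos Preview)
Your proposal is correct and follows essentially the same approach as the paper: reduce to the pointwise data-processing inequality $d_{R,\lambda}(\rho_*\mu_1,\rho_*\mu_2)\le d_{R,\lambda}(\mu_1,\mu_2)$ and then pass to the event inclusion. The only cosmetic difference is that the paper writes out the DPI proof explicitly (via the identity $\frac{d\rho\#\nu_1}{d\rho\#\nu_2}\circ\rho=\mathbb{E}\bigl[\tfrac{d\nu_1}{d\nu_2}\,\big|\,\rho^{-1}(\mathcal{F}_{\mathcal{Y}})\bigr]$ followed by Jensen on $t\mapsto t^\lambda$), which is exactly the ``self-contained'' sketch you already outlined.
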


Data processing inequality will allow us later to inject some additive noise in any layer of a neural network and to ensure Renyi-robustness.

\section{Defense mechanisms based on  Exponential family noise injection}
\label{sec:main_result}
\subsection{Robustness through Exponential family noise injection}
For now, the question of which class of noise to add is treated \textit{ad hoc}. We choose here to investigate one particular class of noise closely linked to the Renyi divergence, namely Exponential family distributions, and demonstrate their interest.
Let us first recall what the Exponential family is.

\begin{definition}[Exponential family]
Let $\Theta$ be an open convex set of $\mathbb{R}^{n}$, and $\theta \in \Theta$. Let $\nu$ be a measure dominated by $\mu$ (either by the Lebesgue or counting measure), it is said to be part of the \emph{Exponential family} of parameter $\theta$ (denoted $E_{F}(\theta,t,k)$) if it has the following probability density function 
$$p_{F}(z,\theta)=\exp\left\{ \langle t(z),\theta \rangle -u(\theta) +k(z) \right\} $$
where $t(z)$ is a sufficient statistic, $k$ a carrier measure (either for a Lebesgue or a counting measure) and $u(\theta)= \log \int_{z} \exp\left\{ <t(z),\theta> +k(z) \right\} dz $.

\end{definition}

To show the robustness of randomized networks with noise injected from the Exponential family, one needs to define the notion of sensitivity for a given deterministic function:
\begin{definition}[Sensitivity of a function]
For any $\alpha\geq0$ and for any $||.||_A$ and $||.||_B$ two norms, the $\alpha$-sensitivity of $f$ w.r.t. $||.||_A$ and $||.||_B$ is defined as $$\Delta^{A,B}_\alpha(f):=\sup\limits_{ x,y \in \mathcal{X}, ||x-y||_{A} \leq \alpha} ||f(x) - f(y) ||_B \enspace.$$
\end{definition}

Let us consider an  $n$-layer feedforward neural network  $\mathcal{N}(.)=\phi^n\circ...\circ\phi^1(.)$. For any $i\in\left[n\right]$, we define $\mathcal{N}_{|i}(.)=\phi^i\circ...\circ\phi^1(.)$ the neural network truncated at layer $i$. Theorem~\ref{thm:netrob} shows that, injecting noise drawn from an Exponential family distribution ensures robustness to adversarial example attacks in the sense of Definition~\ref{def::GeneralizedRobustness}.


    

\begin{theorem}[Exponential family ensures robustness]
\label{thm:netrob}
Let us denote $\mathcal{N}_{X}^i(.)=\phi^n\circ...\circ\phi^{i+1}(\mathcal{N}_{|i}(.)+X)$ with $X$ a random variable. Let us also consider two arbitrary norms $||.||_{A}$ and $||.||_{B}$  respectively on $\mathcal{X}$ and on the output space of $\mathcal{N}_{X}^i$.

\begin{itemize}
    \item If $X\sim E_{F}(\theta,t,k)$  where $t$ and $k$ have non-decreasing modulus of continuity $\omega_t$ and $\omega_k$. Then for any $\alpha \geq 0$, $\mathcal{N}_{X}^i(.)$ defines a probabilistic mapping that is $d_{R,\lambda}$-$(\alpha,\epsilon)$ robust with $\epsilon = ||\theta||_2 \omega^{B,2}_t(\Delta^{A,B}_{\alpha}(\phi)) +\omega_k^{B,1}(\Delta^{A,B}_{\alpha}(\phi)) $ where $||.||_2$ is the norm corresponding to the scalar product in the definition of the exponential family density function and $||.||_1$ is the absolute value on $\mathbb{R}$. The notion of continuity modulus is defined in the supplementary material.
    
\item If $X$ is a centered Gaussian random variable with a non degenerated matrix parameter $\Sigma$. Then for any $\alpha \geq 0$, $\mathcal{N}_{X}^i(.)$ defines a probabilistic mapping that is $d_{R,\lambda}$-$(\alpha,\epsilon)$ robust
with $ \epsilon = \frac{\lambda \Delta^{A,2}_{\alpha}(\phi)^2 }{2 \sigma_{min}(\Sigma) } $ where $||.||_2$ is the canonical Euclidean norm on $\mathbb{R}^n$.
\end{itemize}
\end{theorem}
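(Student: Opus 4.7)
My plan is to prove both parts of the theorem after a common reduction step. Since $\phi^{i+1},\ldots,\phi^n$ are deterministic, the Data Processing Inequality (Proposition~\ref{prop::postprocessing}) lets me strip them off: it is enough to show that the probabilistic mapping $x\mapsto \mathcal{N}_{|i}(x)+X$ is $d_{R,\lambda}$-$(\alpha,\epsilon)$ robust, \emph{i.e.}\ that for every $x$ and every $\tau$ with $\|\tau\|_A\leq\alpha$, the Renyi divergence between the laws of $\mathcal{N}_{|i}(x)+X$ and $\mathcal{N}_{|i}(x+\tau)+X$ is at most $\epsilon$. Writing $a:=\mathcal{N}_{|i}(x)$ and $b:=\mathcal{N}_{|i}(x+\tau)$, the definition of sensitivity yields $\|a-b\|_B\leq \Delta^{A,B}_{\alpha}(\mathcal{N}_{|i})$, and this is the only geometric input I will actually use.

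\textbf{Exponential family case.} Since $X$ has density $p_F(\cdot,\theta)$, the shifted variable $a+X$ has density $z\mapsto p_F(z-a,\theta)$, and similarly for $b$. The log-partition $u(\theta)$ cancels in the log-ratio, giving
\begin{align*}
\log\frac{p_F(z-a,\theta)}{p_F(z-b,\theta)} \;=\; \langle t(z-a)-t(z-b),\theta\rangle \,+\, k(z-a)-k(z-b).
\end{align*}
I would bound the first summand by Cauchy--Schwarz and the modulus of continuity of $t$, and the second by the modulus of continuity of $k$. Monotonicity of both moduli together with $\|a-b\|_B\leq \Delta^{A,B}_{\alpha}(\mathcal{N}_{|i})$ then produces a uniform pointwise bound on $|\log(g_1/g_2)(z)|$ matching the claimed $\epsilon$. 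Substituting into the definition $d_{R,\lambda}(\mu_1,\mu_2)=\tfrac{1}{\lambda-1}\log\int g_2\,(g_1/g_2)^\lambda\,d\nu$ then finishes this part.

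\textbf{Gaussian case.} Here I would directly invoke the closed-form Renyi divergence between two Gaussians with common covariance $\Sigma$ and means $\mu_1,\mu_2$, namely $\tfrac{\lambda}{2}(\mu_1-\mu_2)^\top \Sigma^{-1}(\mu_1-\mu_2)$. Taking $\mu_1=a$, $\mu_2=b$ and dominating the quadratic form by $\|a-b\|_2^2/\sigma_{\min}(\Sigma)$, the sensitivity bound on $\|a-b\|_2$ gives exactly $\epsilon=\lambda\,\Delta^{A,2}_{\alpha}(\mathcal{N}_{|i})^2/(2\sigma_{\min}(\Sigma))$.

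\textbf{Main obstacle.} The Gaussian part is essentially a one-line computation once the closed-form Renyi formula is accepted. The delicate step is converting the pointwise log-ratio bound in the Exponential family case into a $\lambda$-free Renyi bound: naively exponentiating and integrating a uniform bound yields a spurious $\lambda/(\lambda-1)$ factor that must be absorbed. I expect to resolve this by exploiting the symmetry of the pointwise bound in $(a,b)$, or by controlling the log-moment generating function of $\langle t(z-a)-t(z-b),\theta\rangle+k(z-a)-k(z-b)$ under $g_2$ directly rather than via a crude uniform upper bound.
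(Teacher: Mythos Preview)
Your proposal is correct and follows exactly the paper's route: strip the post-noise layers via the data processing inequality, then bound the Renyi divergence between the two shifted noise distributions. The ``main obstacle'' you flag is not real: the paper bypasses the $\lambda/(\lambda-1)$ artifact by invoking the monotonicity of the Renyi divergence in $\lambda$, namely $d_{R,\lambda}\leq d_{R,\infty}=\log\sup_z \frac{g_1(z)}{g_2(z)}$, so your uniform pointwise bound on the log-ratio \emph{is} already a $\lambda$-free Renyi bound (equivalently, rewrite $\int g_2(g_1/g_2)^\lambda=\int g_1(g_1/g_2)^{\lambda-1}\leq e^{(\lambda-1)\epsilon}$).
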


In simpler words, the previous theorem ensures stability in the neural network when injecting noise w.r.t. the distribution of the output. Intuitively, if two inputs are close w.r.t. $\norm{.}_{A}$, the output distributions of the network will be close in the sense of Renyi divergence. It is well known that in the case of deterministic neural networks, the Lipschitz constant becomes bigger as the number of layers increases~\cite{gouk2018regularisation}. By injecting noise at layer $i$, the notion of robustness only depends on the sensitivity of the first $i$ layers of the network and not the following ones. In that sense, randomization provides a more precise control on the ``continuity'' of the neural network. In the next section, we show that thanks to the notion of robustness w.r.t. probabilistic mappings, one can bound the loss of accuracy of a randomized neural network when it is attacked. 

\subsection{Bound on the generalization gap under attack}

The notions of risk and adversarial risk can easily be generalized to encompass probabilistic mappings. 

\begin{definition}[Risks for probabilistic mappings]
 Let $\probmap$ be a probabilistic mapping from $\mathcal{X}$ to $\mathcal{Y}$, the risk and the $\alpha$-radius adversarial risk of $\probmap$ w.r.t. $\mathcal{D}$ are defined as 
\begin{align*}
&\Risk(\probmap):= \mathbb{E}_{(x,y)\sim \mathcal{D}}\left[ \mathbb{E}_{y'\sim \probmap(x)} \left[ \mathds{1} \left( y' \neq y \right)\right]\right]\\
&\advRisk(\probmap):= \mathbb{E}_{(x,y)\sim \mathcal{D}}\left[ \sup_{\norm{\tau}_{\mathcal{X}} \leq \alpha}\mathbb{E}_{y'\sim \probmap(x+\tau)} \left[ \mathds{1} \left( y' \neq y \right)\right]\right]\enspace.
\end{align*}




    
    
    

\end{definition}

The definition of adversarial risk for a probabilistic mapping can be matched with the concept of Expectation over Transformation (EoT) attacks~\cite{athalye2018obfuscated}. Indeed, EoT attacks aim at computing the best opponent in expectation for a given random transformation. In the adversarial risk definition, the adversary chooses the perturbation which has the greatest probability to fool the model, which is a stronger objective than the EoT objective. Theorem~\ref{thm:bound} provides a bound on the gap between the adversarial risk and the regular risk:

\begin{theorem}[Adversarial generalization gap bound in the randomized setting]

\label{thm:bound}
Let $\probmap$ be the probabilistic mapping at hand. Let us suppose that  $\probmap$ is $d_{R,\lambda}$-$(\alpha,\epsilon)$ robust for some $\lambda\geq1$ then:

$$|\advRisk(\probmap)-\Risk(\probmap)|\leq 1-e^{-\epsilon}\mathbb{E}_x\left[e^{-H(\probmap(x))}\right]$$
where $H$ is the Shannon entropy $H(p)=-\sum_i p_i \log(p_i)\enspace.$
\end{theorem}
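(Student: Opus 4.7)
My plan is to reduce the adversarial generalization gap to a pointwise quantity, control it through an elementary algebraic inequality, and then feed it the Kullback--Leibler bound implied by the R\'enyi hypothesis. Since $0 \in B(\alpha)$ one has $\advRisk(\probmap) \geq \Risk(\probmap)$, so the absolute value is irrelevant. Writing $p_y(x) := \probmap(x)(\{y\})$ and $q_{y,\tau}(x) := \probmap(x+\tau)(\{y\})$, the gap rearranges to
\begin{equation*}
\advRisk(\probmap) - \Risk(\probmap) = \mathbb{E}_{(x,y)\sim\mathcal{D}}\!\left[\sup_{\tau \in B(\alpha)} \bigl(p_y(x) - q_{y,\tau}(x)\bigr)\right].
\end{equation*}

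The algebraic kernel of the argument is the following pointwise inequality, valid for any two probability mass functions $p,q$ on $\mathcal{Y}$ and any $y \in \mathcal{Y}$:
\begin{equation*}
p_y - q_y \;\leq\; 1 - \sum_{y' \in \mathcal{Y}} p_{y'}\, q_{y'}.
\end{equation*}
This is checked by rewriting $1 - \sum_{y'} p_{y'} q_{y'} = \sum_{y'} p_{y'}(1-q_{y'})$ and noting that its difference with $p_y - q_y$ equals $q_y(1-p_y) + \sum_{y' \neq y} p_{y'}(1 - q_{y'}) \geq 0$. It replaces the single-coordinate gap by the ``collision-overlap'' $\sum p_{y'} q_{y'}$, which is precisely the object the R\'enyi hypothesis can control.

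To lower bound $\sum_{y'} p_{y'}(x)\, q_{y',\tau}(x)$, I would exploit that R\'enyi divergences are non-decreasing in $\lambda$, so $d_{KL} = d_{R,1} \leq d_{R,\lambda} \leq \epsilon$; together with the symmetry $\tau \mapsto -\tau$ of $B(\alpha)$ this yields $d_{KL}(\probmap(x)\,\|\,\probmap(x+\tau)) \leq \epsilon$ for every admissible $\tau$. Unfolding the KL and isolating the log-terms gives $\sum_{y'} p_{y'}(x) \log q_{y',\tau}(x) \geq -\epsilon - H(\probmap(x))$, and Jensen's inequality applied to the concave $\log$ then yields
\begin{equation*}
\sum_{y' \in \mathcal{Y}} p_{y'}(x)\, q_{y',\tau}(x) \;\geq\; e^{-\epsilon - H(\probmap(x))}.
\end{equation*}

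Combining the algebraic and KL bounds produces the $\tau$-uniform estimate $p_y(x) - q_{y,\tau}(x) \leq 1 - e^{-\epsilon - H(\probmap(x))}$, so the supremum over $\tau$ passes through for free; taking expectation over $(x,y)\sim\mathcal{D}$ and using $e^{-\epsilon - H} = e^{-\epsilon}\, e^{-H}$ yields the claim. The step I expect to be the main obstacle is the algebraic inequality: one must guess in advance that the collision-overlap $\sum p_{y'} q_{y'}$---rather than, for instance, $q_y$ alone---is the right bridge between the coordinate gap appearing in the risk and the quantity that KL$+$Jensen naturally delivers as $e^{-\epsilon - H}$.
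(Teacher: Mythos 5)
Your proposal is correct and follows essentially the same route as the paper's proof: the paper likewise reduces the gap to the collision probability $\mathbb{P}(y_1\neq y_2)=1-\sum_{y'}p_{y'}q_{y'}$ (via the coupling bound $|\mathds{1}(y_1\neq y)-\mathds{1}(y_2\neq y)|\leq\mathds{1}(y_1\neq y_2)$, of which your algebraic inequality is just the pointwise computation) and then lower-bounds the overlap by $e^{-d_{KL}-H}$ using exactly your Jensen step together with the monotonicity of the R\'enyi divergence in $\lambda$. Your explicit $\tau\mapsto-\tau$ argument to orient the (asymmetric) KL term is a small point of care that the paper's proof glosses over, but it does not change the substance of the argument.
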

This theorem gives a control on the loss of accuracy under attack w.r.t. the robustness parameter $\epsilon$ and the entropy of the predictor. It provides a tradeoff between the quantity of noise added in the network and the accuracy under attack. Intuitively, when the noise increases, for any input, the output distribution tends towards the uniform distribution, then, $\epsilon\rightarrow0$ and $H(\probmap(x))\rightarrow \log(K)$, and the risk and the adversarial risk both tends to $\frac{1}{K}$ where $K$ is the number of classes in the classification problem. On the opposite, if no noise is injected, for any input, the output distribution is a  Dirac distribution, then, if the prediction for the adversarial example is not the same as for the regular one, $\epsilon\rightarrow\infty$ and $H(\probmap(x))\rightarrow 0$. Hence, the noise needs to be designed both to preserve accuracy and robustness to adversarial attacks. In the Section~\ref{section::experiment}, we give an illustration of this bound when $\probmap$ is a neural network with noise injection at input level as presented in Theorem~\ref{thm:netrob}.




\section{Numerical experiments}
\label{section::experiment}
To illustrate our theoretical findings, we train randomized neural networks with a simple method which consists in injecting a noise drawn from an Exponential family distribution in the image during training and inference. This section aims to answer \textbf{Q2} stated in the introduction, by tackling the following sub-questions:
\begin{itemize}
    \item[\textbf{Q2.1:}] How does the randomization impact the accuracy of the network? And, how does the theoretical trade-off between accuracy and robustness apply in practice? 
    \item[\textbf{Q2.2:}] What is the accuracy under attack of randomized neural networks against powerful iterative attacks? And how does randomized neural networks compare to state-of-the-art defenses given the intensity of the injected noise? 
\end{itemize}

\subsection{Experimental setup}
We present our results and analysis on  CIFAR-10, CIFAR-100 \cite{krizhevsky2009learning} and ImageNet datasets \cite{imagenet_cvpr09}. For CIFAR-10 and CIFAR-100 \cite{krizhevsky2009learning}, we used a Wide ResNet architecture \cite{ZagoruykoK16} which is a variant of the ResNet model from \cite{he2016deep}. We use 28 layers with a widen factor of 10. We train all networks for 200 epochs, a batch size of 400, dropout 0.3 and Leaky Relu activation with a slope on $\mathbb{R}^-$ of 0.1. We minimize the Cross Entropy Loss with Momentum 0.9 and use a piecewise constant learning rate of 0.1, 0.02, 0.004 and 0.00008 after respectively 7500, 15000 and 20000 steps. The networks achieve for CIFAR10 and 100 a TOP-1 accuracy of 95.8\% and 79.1\% respectively on test images. For ImageNet \cite{imagenet_cvpr09}, we use an Inception ResNet v2 \cite{szegedy2017inception} which is the sate of the art architecture for this dataset and achieve a TOP-1 accuracy of 80\%. For the training of ImageNet, we use the same hyper parameters setting as the original implementation. We train the network for 120 epochs with a batch size of 256, dropout 0.8 and Relu as activation function. All evaluations were done with a single crop on the non-blacklisted subset of the validation set.

To transform these classical networks to probabilistic mappings, we inject noise drawn from Laplace and Gaussian distributions, each with various standard deviations. While the noise could theoretically be injected anywhere in the network, we inject the noise on the image for simplicity. More experiments with noise injected in the first layer of the network are presented in the supplementary material. To evaluate our models under attack, we use three powerful iterative attacks with different norms: \emph{ElasticNet} attack (EAD)~\cite{chen2018ead} with $\ell_1$ distortion, \emph{Carlini\&Wagner} attack (C\&W)~\cite{carlini2017towards} with $\ell_2$ distortion and \emph{Projected Gradient Descent} attack (PGD)~\cite{madry2017towards} with $\ell_\infty$ distortion. All standard deviations and attack intensities are in between $-1$ and $1$. Precise descriptions of our numerical experiments and of the attacks used for evaluation are deferred to the supplementary material.

\paragraph{Attacks against randomized defenses:} It has been pointed out by \cite{athalye2017synthesizing,carlini2019evaluating} that in a white box setting, an attacker with a complete knowledge of the system will know the distribution of the noise injected in the network. As such, to create a stronger adversarial example, the attacker can take the expectation of the loss or the logits of the randomized network during the computation of the attack. This technique is called Expectation Over Transformation ($\EoT$) and we use a Monte Carlo method with $80$ simulations to approximate the best perturbation for a randomized network. 

\subsection{Experimental results}

\paragraph{Trade-off between accuracy and intensity of noise (Q2.1):}

\begin{figure}[t]
\centering
\subfigure[]{
    \includegraphics[width=.31\textwidth]{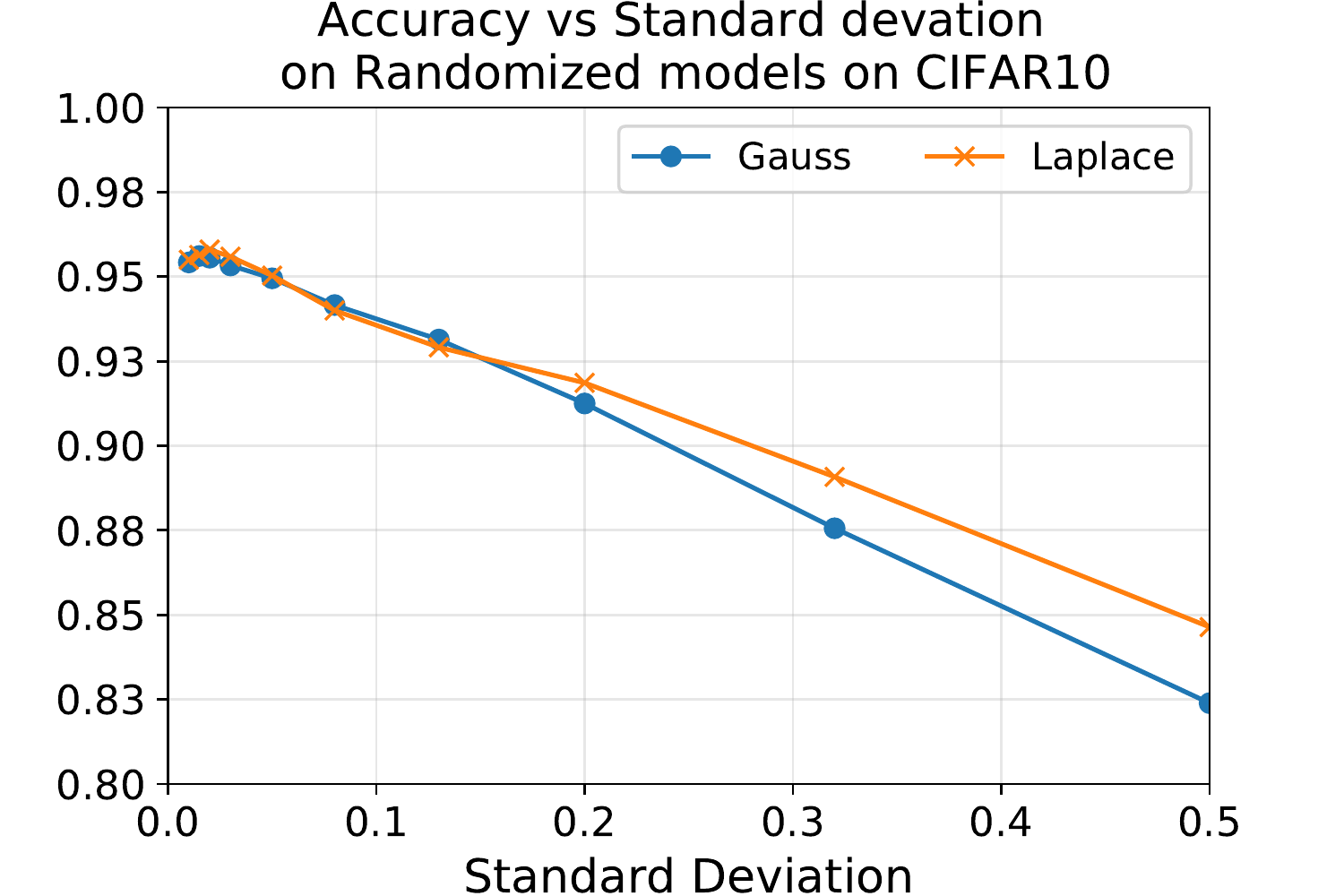}
    \label{fig:acc_sd_CIFAR10}
    }
\subfigure[]{
    \includegraphics[width=.31\textwidth]{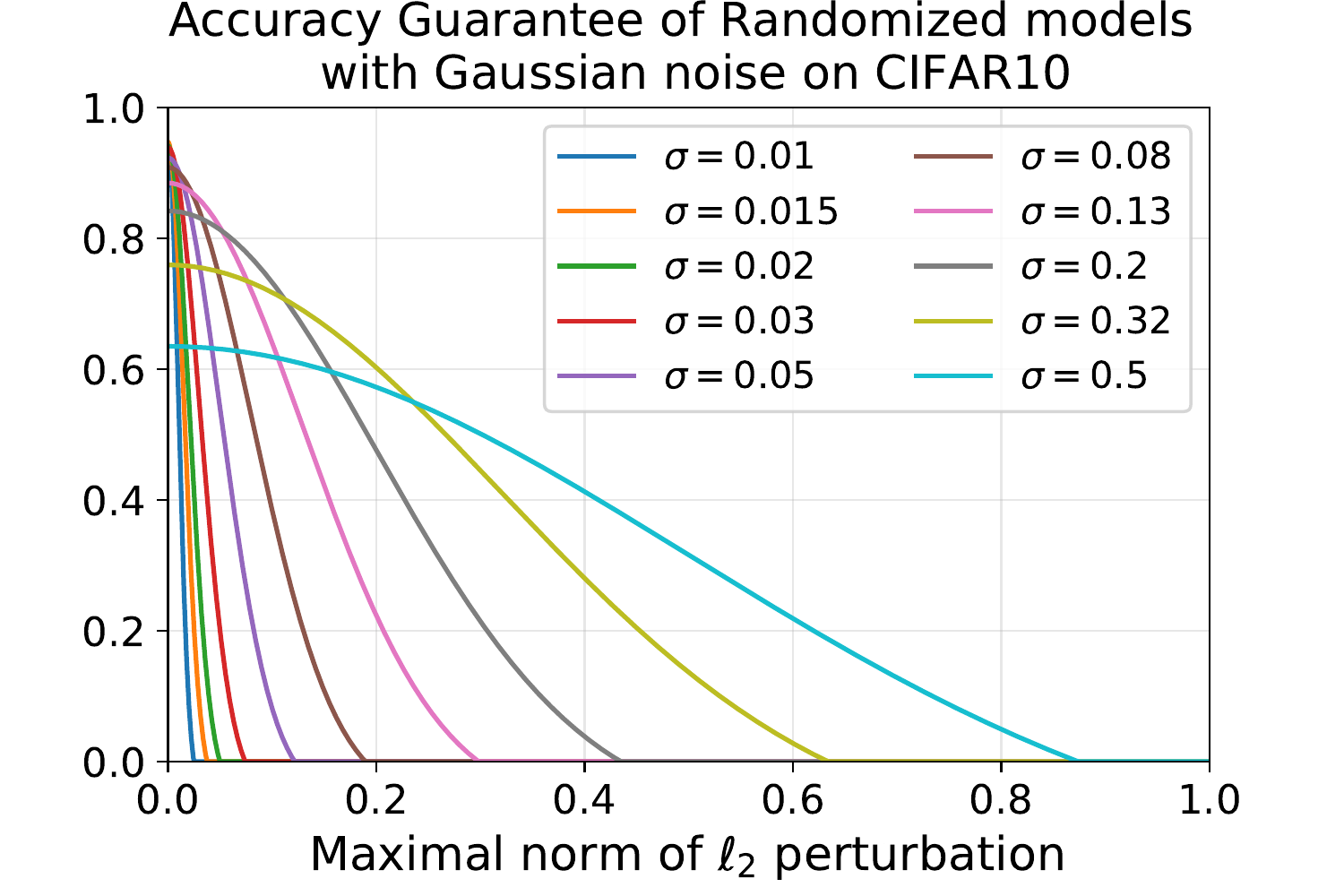}
    \label{fig:gauss_certif_CIFAR10}
    }
\subfigure[]{
    \includegraphics[width=.31\textwidth]{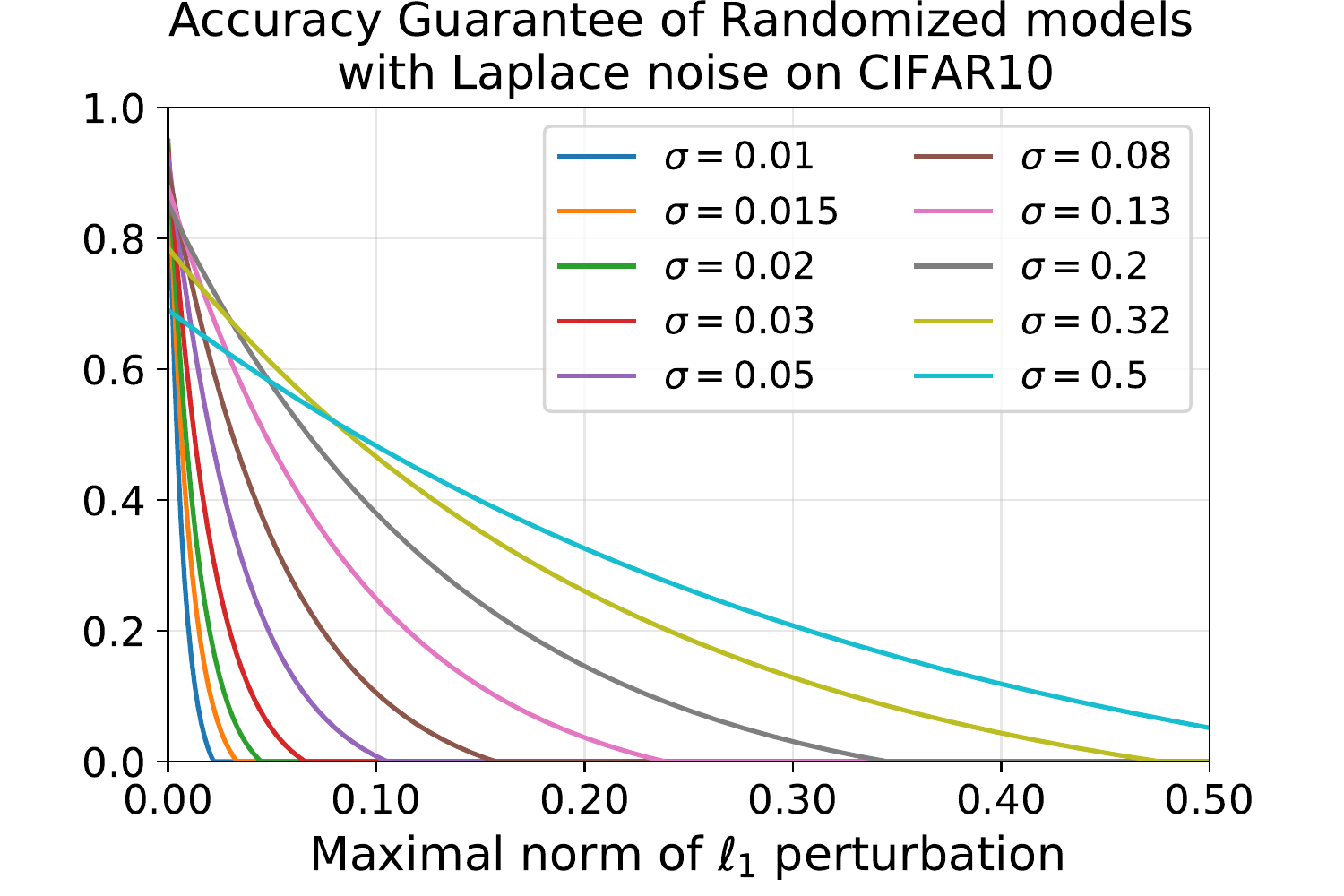}
    \label{fig:laplace_certif_CIFAR10}
    }
 \label{fig:cifar10_results}
\caption{(a) Impact of the standard deviation of the injected noise on accuracy in a randomized model on CIFAR-10 dataset with a Wide ResNet architecture. (b) and (c) illustration of the guaranteed accuracy of different randomized models with Gaussian (b) and Laplace (c) noises given the norm of the adversarial perturbation.}
\end{figure}

When injecting noise as a defense mechanism, regardless of the distribution it is drawn from, we observe (as in Figure~\ref{fig:acc_sd_CIFAR10}) that the accuracy decreases when the noise intensity grows. In that sense, noise needs to be calibrated to preserve both accuracy and robustness against adversarial attacks, i.e. it needs to be large enough to preserve robustness and small enough to preserve accuracy. Figure~\ref{fig:acc_sd_CIFAR10} shows the loss of accuracy on CIFAR10 from $0.95$ to $0.82$ (respectively $0.95$ to $0.84$) with noise drawn from a Gaussian distribution (respectively Laplace) with a standard deviation from $0.01$ to $0.5$. Figure~\ref{fig:gauss_certif_CIFAR10} and \ref{fig:laplace_certif_CIFAR10} illustrate the theoretical lower bound on accuracy under attack of Theorem~\ref{thm:bound} for different distributions and standard deviations. The term in entropy of Theorem~\ref{thm:bound} has been estimated using a Monte Carlo method with $10^4$ simulations. The trade-off between accuracy and robustness from Theorem~\ref{thm:bound} thus appears w.r.t the noise intensity. With small noises, the accuracy is high, but the guaranteed accuracy drops fast w.r.t the magnitude of the adversarial perturbation. Conversely, with bigger noises, the accuracy is lower but decreases slowly w.r.t the magnitude of the adversarial perturbation. These Figures also show that Theorem~\ref{thm:bound} gives strong accuracy guarantees against small adversarial perturbations. Next paragraph shows that in practice, randomized networks achieve much higher accuracy under attack than the theoretical bound, and keep this accuracy against much larger perturbations.

\paragraph{Performance of randomized networks under attacks and comparison to state of the art (Q2.2):}\label{sec:perf_under_attack}

While Figure~\ref{fig:gauss_certif_CIFAR10} and \ref{fig:laplace_certif_CIFAR10} illustrated a theoretical robustness against growing adversarial perturbations, Table~\ref{tab:accuracy_under_attack} illustrates this trade-off experimentally. It compares the accuracy obtained under attack by a deterministic network with the one obtained by randomized networks with Gaussian and Laplace noises both with low ($0.01$) and high ($0.5$) standard deviations. Randomized networks with a small noise lead to no loss in accuracy with a small robustness while high noise leads to a higher robustness at the expense of loss of accuracy ($\sim11$ points). 

\begin{table}[t]
  \centering
  \caption{Accuracy under attack on the CIFAR-10 dataset with a randomized Wide ResNet architecture. We compare the accuracy on natural images and under attack with different noise over 3 iterative attacks (the number of steps is next to the name) made with 80 Monte Carlo simulations to compute EoT attacks. The first line is the baseline, no noise has been injected.}
    \begin{tabular}{lccccc}
    \toprule
    \textbf{Distribution} & \textbf{Sd} & \textbf{Natural} & \textbf{$\ell_1$ -- EAD 60} & \textbf{$\ell_2$ -- C\&W 60} & \textbf{$\ell_\infty$ -- PGD 20} \\
    \midrule
    - & - & 0.958 & 0.035 & 0.034 & 0.384 \\
    \midrule
    \multirow{2}[0]{*}{Normal} & 0.01 & 0.954 & 0.193 & 0.294 & 0.408 \\
          & 0.50 & 0.824 & 0.448 & 0.523 & 0.587 \\
    \midrule
    \multirow{2}[0]{*}{Laplace} & 0.01 & 0.955 & 0.208 & 0.313 & 0.389 \\
          & 0.50 & 0.846 & 0.464 & 0.494 & 0.589 \\
    \bottomrule
    \end{tabular}%
  \label{tab:accuracy_under_attack}%
\end{table}%


\begin{table}[t]
  \caption{Accuracy under attack of randomized neural network with different distributions and standard deviations versus adversarial training by Madry et al. \cite{madry2017towards}. The PGD attack has been made with 20 step, an epsilon of 0.06 and a step size of 0.006 (input space between $-1$ and $+1$). The Carlini\&Wagner attack uses 30 steps, 9 binary search steps and a 0.01 learning rate. The first line refers to the baseline without attack.}
  \label{Results}
  \centering
  \begin{tabular}{ccccccc}
    \toprule
      & & \multirow{2}[0]{*}{\thead{\textbf{Madry et al.} \\ \cite{madry2017towards}}} & \multirow{2}[0]{*}{\textbf{Normal 0.32}} & \multirow{2}[0]{*}{\textbf{Laplace 0.32}} & \multirow{2}[0]{*}{\textbf{Normal 0.5}} & \multirow{2}[0]{*}{\textbf{Laplace 0.5}} \\
     \textbf{Attack} & \textbf{Steps} & & & \\
    \midrule
    -  & - & 0.873 & 0.876 & 0.891 & 0.824 & 0.846 \\ 
    $\ell_\infty$ -- PGD & 20 & 0.456 & 0.566 & 0.576 & 0.587 & 0.589 \\
    $\ell_2$ -- C\&W & 30 & 0.468 & 0.512 & 0.502 & 0.489 & 0.479 \\
    \bottomrule
  \end{tabular}
  \label{table:madry_vs_random}
\end{table}
Finally, Table~\ref{table:madry_vs_random} compares the accuracy and the accuracy under attack of randomized networks with Gaussian and Laplace distributions for different standard deviations against adversarial training from Madry et al.~\cite{madry2017towards}. We observe that the accuracy on natural images of both noise injection methods are similar to the one from~\cite{madry2017towards}. Moreover, both methods are more robust than adversarial training to PGD and C\&W attacks. As with all the experiments, to construct an EoT attack,  we use 80 Monte Carlo simulations at every step of PGD and C\&W attacks. These experiments show that randomized defenses can be competitive given the intensity of noise injected in the network. Note that these experiments have been led with $\EoT$ of size 80. For much bigger sizes of $\EoT$ these results would be mitigated. Nevertheless, the accuracy would never drop under the bounds illustrated in Figure~\ref{fig:cifar10_results}, since Theorem~\ref{thm:bound} gives a bound that on the worst case attack strategy (including $\EoT$).

\section{Conclusion and future works}
\label{section::conclusion}
This paper brings new contributions to the field of provable defenses to adversarial attacks. Principled answers have been provided to key questions on the interest of randomization techniques, and on their loss of accuracy under attack. The obtained bounds have been illustrated in practice by conducting thorough experiments on baseline datasets such as CIFAR and ImageNet. We show in particular that a simple method based on injecting noise drawn from the Exponential family is competitive compared to baseline approaches while leading to provable guarantees. Future work will focus on investigating other noise distributions belonging or not to the Exponential family, combining randomization with more sophisticated defenses and on devising new tight bounds on the adversarial generalization gap.

\bibliographystyle{abbrv}
\bibliography{biblio}

    
    \setcounter{section}{0}
    \setcounter{proposition}{0}
    \setcounter{lemma}{0}
    \setcounter{theorem}{0}
    \setcounter{definition}{0}
    \setcounter{corollary}{0}
    \setcounter{property}{0}
    \setcounter{example}{0}

\newpage
\begin{adjustwidth}{-1.5cm}{-1cm}

\appendix
\addcontentsline{toc}{section}{Appendix} 
\part{Supplementary Material} 
\noptcrule 
\parttoc

\section{Notations and definitions}

Let us consider an output space $\mathcal{Y}$, and $\mathcal{F}_{\mathcal{Y}}$ a $\sigma$-$ algebra$ over $\mathcal{Y}$. We denote $\mathcal{P}(\mathcal{Y})$ the set of probability measures over $(\mathcal{Y},\mathcal{F}_{\mathcal{Y}})$.
Let $(\mathcal{Y'},\mathcal{F}_{\mathcal{Y'}})$ be a second measurable space, and $\phi$ a measurable mapping from $(\mathcal{Y},\mathcal{F}_{\mathcal{Y}})$ to $(\mathcal{Y'},\mathcal{F}_{\mathcal{Y'}})$. Finally Let us consider $\mu,\nu$ two measures on $(\mathcal{Y},\mathcal{F}_{\mathcal{Y}})$.

\textbf{Dominated measure:} $\mu$ is said to be dominated by $\nu$ (denoted $\mu \ll \nu$) if for all $Y \in \mathcal{F}_{\mathcal{Y}}$, $\ \nu(Y) = 0 \implies \mu(Y)=0$. If $\mu$ is dominated by $\nu$, there is a measurable function $h : \mathcal{Y} \rightarrow [0,+\infty)$ such that for all $Y \in \mathcal{F}_{\mathcal{Y}}$, $ \mu(Y)=\int_{Y} h \ d\nu$. $ h $ is called the Radon-Nikodym derivative of $\mu$ w.r.t. $\nu$ and is denoted $\frac{d \mu}{d \nu}$.
   
\textbf{Push-forward measure:} the push-forward measure of $\nu$ by $\phi$ (denoted $\phi \# \nu$) is the measure on $(\mathcal{Y'},\mathcal{F}_{\mathcal{Y'}})$ such that $\forall Z \in \mathcal{F}_{\mathcal{Y'}}, \ \phi \# \nu(Z) = \nu(\phi^{-1}(Z)) $.

\textbf{Convolution product:} the convolution of $\nu$ with $\mu$, denoted $\nu * \mu$ is the push-forward measure of $\nu \otimes \mu$ by the addition on $\mathcal{Y}$. Since the convolution between functions is defined accordingly, we use $*$ indifferently for measures and simple functions. 

\textbf{Modulus of continuity:} Let us consider $f:(E,\norm{.}_E)\rightarrow(F,\norm{.}_F)$. $f$ admits a non-decreasing modulus of continuity regarding $\norm{.}_E$ and $\norm{.}_F$ if there exists a non-decreasing function $\omega^{E,F}_f:\mathbb{R^+}\rightarrow\mathbb{R^+}$ such as for all $x,y\in E$, $||f(y)-f(x)||_F\leq \omega^{E,F}_f(||x-y||_E)$.

\newpage
\section{Main proofs}
As a trade-off between completeness and conciseness, we delayed the proofs of our theorems to this section.
\subsection{Proof of Lemma~\ref{th::PropimpliesRobustness-appendix}}

\begin{lemma}
\label{th::PropimpliesRobustness-appendix}Let $\probmap$ be a probabilistic mapping, and
let  $d_{1}$ and $d_{2}$ be two metrics on $\mathcal{P}(\mathcal{Y})$.
If there exists a non decreasing function $ \phi: \mathbb{R} \to \mathbb{R}$ such that  $\forall \mu_1,\mu_2 \in \mathcal{P}(\mathcal{Y})$, $d_{1}(\mu_1,\mu_2) \leq \phi(d_{2}(\mu_1,\mu_2)) $, then the following holds: 
$$\probmap \text{ is } d_{2}\text{-}(\alpha, \epsilon, \gamma)\text{-robust} \implies \probmap \text{ is }d_{1}\text{-}(\alpha, \phi(\epsilon), \gamma)\text{-robust}$$
\end{lemma}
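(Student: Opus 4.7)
The plan is to reduce the claim to a set-inclusion between the two ``bad'' events that drive the PC-adversarial risks, and then conclude by monotonicity of probability. The hypothesis $d_1 \leq \phi \circ d_2$ together with monotonicity of $\phi$ is exactly what lets us transport the threshold bound from $d_2$ to $d_1$ without losing the witnessing perturbation.

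First I would fix an arbitrary $x\in\mathcal{X}$ and $\tau\in B(\alpha)$, set $\mu_1=\probmap(x+\tau)$ and $\mu_2=\probmap(x)$, and observe the pointwise contrapositive: if $d_2(\mu_1,\mu_2)\leq \epsilon$, then since $\phi$ is non-decreasing,
\[
d_1(\mu_1,\mu_2)\;\leq\;\phi\bigl(d_2(\mu_1,\mu_2)\bigr)\;\leq\;\phi(\epsilon),
\]
so $d_1(\mu_1,\mu_2)>\phi(\epsilon)$ forces $d_2(\mu_1,\mu_2)>\epsilon$. Since the same $\tau$ witnesses both strict inequalities, this pointwise implication lifts immediately to the existential quantifier, giving the set inclusion
\[
\bigl\{x:\exists\tau\in B(\alpha),\ d_1(\probmap(x+\tau),\probmap(x))>\phi(\epsilon)\bigr\}\ \subseteq\ \bigl\{x:\exists\tau\in B(\alpha),\ d_2(\probmap(x+\tau),\probmap(x))>\epsilon\bigr\}.
\]
Taking $\mathbb{P}_{x\sim\mathcal{D}_\mathcal{X}}$ of both sides and invoking the $d_2$-$(\alpha,\epsilon,\gamma)$-robustness hypothesis yields a $d_1$-PC-adversarial risk at threshold $\phi(\epsilon)$ bounded by $\gamma$, which is the desired $d_1$-$(\alpha,\phi(\epsilon),\gamma)$-robustness.

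There is no real obstacle: the argument is a one-line contrapositive followed by monotonicity of measure. The only point worth a sentence is that monotonicity of $\phi$ is essential (without it, $\phi(d_2)\leq\phi(\epsilon)$ would not follow from $d_2\leq\epsilon$), and that measurability of the event $\{\exists\tau\in B(\alpha),\ d_i(\probmap(x+\tau),\probmap(x))>\eta\}$ is implicitly assumed in the paper's definition of $\PCadvRisk$, so it can be used here without further comment.
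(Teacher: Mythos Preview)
Your proof is correct and follows essentially the same approach as the paper: both use the pointwise chain $d_1\leq\phi(d_2)\leq\phi(\epsilon)$ (via monotonicity of $\phi$) to transport the threshold, then pass to the probability over $x$. Your version is slightly more explicit, phrasing it as a set inclusion of the ``bad'' events rather than the paper's direct bound on the complementary ``good'' event, but the underlying argument is identical.
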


\begin{proof}
Let us consider a probabilistic mapping $\probmap$, $x \sim \mathcal{D}$, and $\tau \in B(\alpha)$, one has $d_{1}(\probmap(x),\probmap(x +\tau)) \leq \phi(d_{2}(\probmap(x),\probmap(x +\tau))) \leq \phi(\epsilon).$ Hence  $\mathbb{P}_{x\sim\mathcal{D}}\left[ \forall \tau \in B(\alpha),\ d_1(\probmap(x+\tau),\probmap(x)) \leq \phi(\epsilon) \right] \leq 1-\gamma$. By inverting the inequality, one gets the expected result.
\end{proof}
\subsection{Proof of Proposition~\ref{prop:RobustTV-appendix}}


\begin{proposition}[Renyi-robustness implies TV-robustness]
\label{prop:RobustTV-appendix}
Let $\probmap$ be a probabilistic mapping, then for all $\lambda\geq1$:
$$\probmap \text{ is }  d_{R,\lambda}\text{-}(\alpha, \epsilon, \gamma)\text{-robust} \implies \probmap \text{ is } d_{TV}\text{-}(\alpha, \epsilon', \gamma)\text{-robust}$$
$$\textnormal{ with } \epsilon' = \min \left(\frac{3}{2}\left(\sqrt{1 + \frac{4\epsilon}{9}} - 1\right)^{1/2}, \frac{\exp(\epsilon +1) -1}{\exp(\epsilon +1) +1}\right).$$


\end{proposition}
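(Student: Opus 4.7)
By Lemma~\ref{th::PropimpliesRobustness-appendix} applied with $d_1 = d_{TV}$ and $d_2 = d_{R,\lambda}$, it is enough to exhibit a non-decreasing function $\phi:\mathbb{R}\to\mathbb{R}$ such that for all $\mu_1,\mu_2\in\mathcal{P}(\mathcal{Y})$,
\[
d_{TV}(\mu_1,\mu_2) \;\le\; \phi\!\left(d_{R,\lambda}(\mu_1,\mu_2)\right),
\]
and then $\epsilon' = \phi(\epsilon)$ is the desired robustness parameter. Since the stated $\epsilon'$ is a minimum of two expressions, my plan is to establish two separate pointwise inequalities, each clearly non-decreasing in $d_{R,\lambda}$, and to combine them by taking the pointwise minimum at the end.

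\paragraph{First bound (small $\epsilon$ regime).}
The Renyi divergence is non-decreasing in its order $\lambda$ and tends to the Kullback--Leibler divergence as $\lambda\to 1$, so $d_{KL}(\mu_1,\mu_2)\le d_{R,\lambda}(\mu_1,\mu_2)$ for every $\lambda\ge 1$. I will then invoke the refined Pinsker inequality (Fedotov--Harremo\"es--Tops\o e / Gilardoni)
\[
d_{KL}(\mu_1,\mu_2) \;\geq\; 2\,d_{TV}(\mu_1,\mu_2)^2 + \tfrac{4}{9}\,d_{TV}(\mu_1,\mu_2)^4.
\]
Setting $t := d_{TV}$ and $x := d_{R,\lambda}$, the inequality $2t^2 + \tfrac{4}{9}t^4 \le x$ is a quadratic in $t^2$; solving it gives $t^2 \le \tfrac{9}{4}\bigl(\sqrt{1+4x/9}-1\bigr)$, i.e. the first expression in the $\min$. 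Monotonicity in $x$ is immediate.

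\paragraph{Second bound (large $\epsilon$ regime).}
For the second term, I will use a differential-privacy--style argument. Denote by $g_1,g_2$ the densities and fix the threshold $t_\star := e^{\epsilon+1}$. Let $A := \{y:\ g_1(y)/g_2(y) > t_\star\}$. Using the exponential moment identity
\[
\int_{\mathcal{Y}} (g_1/g_2)^{\lambda}\,g_2\,d\nu \;=\; e^{(\lambda-1)\,d_{R,\lambda}(\mu_1,\mu_2)} \;\le\; e^{(\lambda-1)\epsilon},
\]
a Markov-type estimate controls both $\mu_1(A)$ and $\mu_2(A)$; the particular choice $t_\star=e^{\epsilon+1}$ makes the residual masses negligible and reduces the problem to the case where the likelihood ratio is pointwise bounded by $t_\star$. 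On $A^{c}$ the pair $(\mu_1,\mu_2)$ satisfies an $(\epsilon+1)$-DP-type bounded-ratio condition, and the classical optimization of $\sup_B\{\mu_1(B)-\mu_2(B)\}$ under such a constraint gives precisely
\[
d_{TV}(\mu_1,\mu_2)\;\le\;\frac{e^{\epsilon+1}-1}{e^{\epsilon+1}+1},
\]
which is again non-decreasing in $\epsilon$. Taking the pointwise minimum of the two bounds and applying Lemma~\ref{th::PropimpliesRobustness-appendix} concludes the proof.

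\paragraph{Main obstacle.}
The first bound is routine once the refined Pinsker inequality and monotonicity of $d_{R,\lambda}$ in $\lambda$ are invoked. The delicate step is the second one: tuning the threshold exactly to $t_\star=e^{\epsilon+1}$ so that the Markov tail on $A$ is absorbed into the $+1$ shift and the leftover piece on $A^{c}$ matches the clean DP-TV bound $\tfrac{e^{\epsilon+1}-1}{e^{\epsilon+1}+1}$ rather than $\tfrac{e^{\epsilon}-1}{e^{\epsilon}+1}$ plus an additive residual. This $+1$ is the price paid for passing from the finite-order Renyi divergence to a pointwise bound on the likelihood ratio, and the book-keeping is where the argument has to be done carefully.
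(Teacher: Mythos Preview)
Your first bound is exactly the paper's argument: monotonicity of the Renyi divergence in $\lambda$ gives $d_{KL}\le d_{R,\lambda}$, and the refined Pinsker inequality $d_{KL}\ge 2d_{TV}^2+\tfrac{4}{9}d_{TV}^4$ inverts to the stated expression. No issue there.

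For the second bound your route diverges from the paper, and the sketch as written does not go through. The ``classical DP--TV bound'' $\tfrac{C-1}{C+1}$ that you invoke on $A^{c}$ requires a \emph{two-sided} likelihood-ratio constraint $C^{-1}\le g_1/g_2\le C$; under the one-sided constraint $g_1/g_2\le C$ the extremal pair (e.g.\ $\mu_2$ uniform on $[0,1]$ and $\mu_1$ with density $C$ on $[0,1/C]$) already attains $d_{TV}=1-1/C=(C-1)/C$, which is strictly larger than $(C-1)/(C+1)$. A bound on $d_{R,\lambda}(\mu_1,\mu_2)$ alone only controls the upper tail of $g_1/g_2$ (Markov on $\mathbb{E}_{\mu_2}[(g_1/g_2)^{\lambda}]$ or $\mathbb{E}_{\mu_1}[(g_1/g_2)^{\lambda-1}]$); the lower tail would need $d_{R,\lambda}(\mu_2,\mu_1)$, which the robustness hypothesis does not provide. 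So the ``$+1$ shift'' cannot simultaneously absorb the Markov residual on $A$ \emph{and} manufacture the missing lower-tail control on $A^{c}$.

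The paper avoids this entirely by appealing to Vajda's inequality
\[
d_{KL}(\mu_1,\mu_2)\;\ge\;\log\!\frac{1+d_{TV}}{1-d_{TV}}\;-\;\frac{2\,d_{TV}}{1+d_{TV}},
\]
noting that the subtracted term is at most $1$ on $d_{TV}\in[0,1]$, and then inverting $d_{KL}+1\ge \log\frac{1+d_{TV}}{1-d_{TV}}$ to obtain $d_{TV}\le \frac{e^{d_{KL}+1}-1}{e^{d_{KL}+1}+1}$. Combined with $d_{KL}\le d_{R,\lambda}$ and Lemma~\ref{th::PropimpliesRobustness-appendix}, this gives the second branch of the $\min$ directly. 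I would replace your differential-privacy paragraph with this Vajda argument; the rest of your plan (take the pointwise minimum of the two non-decreasing bounds and apply Lemma~\ref{th::PropimpliesRobustness-appendix}) is fine.
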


\begin{proof}
Given two probability measures $\mu_1$ and $\mu_2$ on $(\mathcal{Y},\mathcal{F}_{\mathcal{Y}})$, and $\lambda >0$ one wants to find a bound on $d_{TV}(\mu_1,\mu_2)$ as a functional of $d_{R,\lambda}(\mu_1,\mu_2)$. 

Thanks to~\cite{5605338}, one has $$ d_{KL}(\mu_1,\mu_2) \geq 2d_{TV}(\mu_1,\mu_2)^{2}+ \frac{4d_{TV}(\mu_1,\mu_2)^{4}}{9}. $$ From which it follows that $$d_{TV}(\mu_1,\mu_2)^{2} \leq \frac{9}{4}\left(\sqrt{1 + \frac{4d_{KL}(\mu_1,\mu_2)}{9}} - 1\right)$$
One thus finally gets: 
$$d_{TV}(\mu_1,\mu_2) \leq \frac{3}{2}\left(\sqrt{1 + \frac{4d_{KL}(\mu_1,\mu_2)}{9}} - 1\right)^{1/2}$$
Moreover, using inequality from~\cite{Vajda1970}, one gets:
$$d_{KL}(\mu_1,\mu_2) \geq \log\left(\frac{1 + d_{TV}(\mu_1,\mu_2)}{1 - d_{TV}(\mu_1,\mu_2)} \right) - \frac{2d_{TV}(\mu_1,\mu_2)}{1 + d_{TV}(\mu_1,\mu_2)}$$
For the sake of simplicity, since the second part of the right hand side of the equation is non increasing given $d_{TV}(\mu_1,\mu_2)$, and since $0\leq d_{TV}(\mu_1,\mu_2)\leq 1$  one gets:
$$d_{KL}(\mu_1,\mu_2) +1 \geq \log\left(\frac{1 + d_{TV}(\mu_1,\mu_2)}{1 - d_{TV}(\mu_1,\mu_2)} \right) $$
Hence, one gets:
$$ \frac{\exp(d_{KL}(\mu_1,\mu_2) +1) -1}{\exp(d_{KL}(\mu_1,\mu_2) +1) +1} \geq  d_{TV}(\mu_1,\mu_2) $$
By combining the two results, one obtains: $$ d_{TV}(\mu_1,\mu_2) \leq \min \left(\frac{3}{2}\left(\sqrt{1 + \frac{4 d_{KL}(\mu_1,\mu_2)}{9}} - 1\right)^{1/2}, \frac{\exp(d_{KL}(\mu_1,\mu_2) +1) -1}{\exp(d_{KL}(\mu_1,\mu_2) +1) +1} \right).$$
To conclude for $\lambda>1$ it suffices to use Lemma~\ref{th::PropimpliesRobustness-appendix}, and the monotony of Renyi divergence regarding $\lambda$.
\end{proof}

\subsection{Proof of Proposition~\ref{prop::postprocessing-appendix}}

\begin{proposition}[Data processing inequality]
\label{prop::postprocessing-appendix} 
Let us consider a probabilistic mapping $\probmap:\mathcal{X}\rightarrow\mathcal{P}(\mathcal{Y})$. Let us also denote $\rho:\mathcal{Y}\rightarrow\mathcal{Y}'$ a deterministic function.
If $U\sim \probmap(x)$ then the probability measure $M'(x)$ s.t. $\rho(U) \sim M'(x)$ defines a probabilistic mapping $M':\mathcal{X}\rightarrow\mathcal{P}(\mathcal{Y}')$.

For any $\lambda>1$ if $\probmap$ is $d_{R,\lambda}$-$(\alpha,\epsilon,\gamma)$ robust then $M'$ is also is $d_{R,\lambda}$-$(\alpha,\epsilon,\gamma)$ robust.
\end{proposition}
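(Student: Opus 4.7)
The key observation is that, for each $x \in \mathcal{X}$, the probability measure $M'(x)$ is exactly the push-forward measure $\rho \# \probmap(x)$, since if $U \sim \probmap(x)$ then by definition of push-forward, $\rho(U) \sim \rho \# \probmap(x)$. So the whole statement reduces to the classical data processing inequality for Renyi divergence: for any two probability measures $\mu_1,\mu_2 \in \mathcal{P}(\mathcal{Y})$ and any measurable $\rho:\mathcal{Y}\to\mathcal{Y}'$,
\[
d_{R,\lambda}(\rho\#\mu_1,\,\rho\#\mu_2) \;\leq\; d_{R,\lambda}(\mu_1,\mu_2).
\]

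The plan is therefore in two steps. First, I would establish the above inequality. The cleanest route is to recognize $d_{R,\lambda}$ (for $\lambda>1$) as a monotone transform of an $f$-divergence with convex $f(t)=t^\lambda$, and then invoke the standard fact that $f$-divergences cannot increase under a deterministic (or more generally Markov) channel. Concretely: dominate $\mu_1,\mu_2$ by $\nu$, set $g_i = d\mu_i/d\nu$, and write
\[
\int_{\mathcal{Y}'}\left(\frac{d(\rho\#\mu_1)}{d(\rho\#\mu_2)}\right)^{\!\lambda} d(\rho\#\mu_2)
= \int_{\mathcal{Y}'} \mathbb{E}_{\rho\#\mu_2}\!\left[\left(\tfrac{g_1}{g_2}\right)^{\!\lambda}\Big|\rho=z\right]^{???} \ldots
\]
More directly, apply Jensen's inequality to the convex function $t\mapsto t^\lambda$ conditionally on the $\sigma$-algebra generated by $\rho$: writing $h=g_1/g_2$,
\[
\int \bigl(\mathbb{E}_{\mu_2}[h\mid\rho]\bigr)^{\lambda}\,d(\rho\#\mu_2)
\;\leq\; \int \mathbb{E}_{\mu_2}[h^{\lambda}\mid\rho]\,d(\rho\#\mu_2)
\;=\; \int h^{\lambda}\,d\mu_2,
\]
and the ratio $d(\rho\#\mu_1)/d(\rho\#\mu_2)$ evaluated at $\rho(y)$ is precisely $\mathbb{E}_{\mu_2}[h\mid\rho](\rho(y))$. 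Taking $\tfrac{1}{\lambda-1}\log$ of both sides (monotone for $\lambda>1$) yields the inequality. The limits $\lambda\to 1$ and $\lambda\to\infty$ are handled by continuity of Renyi divergence in $\lambda$.

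Second, I plug this into the robustness definition. By hypothesis,
\[
\mathbb{P}_{x\sim\mathcal{D}_{\mathcal{X}}}\!\left[\exists\tau\in B(\alpha):\ d_{R,\lambda}(\probmap(x+\tau),\probmap(x))>\epsilon\right]\leq\gamma.
\]
Fix any $x$ in the complement (a set of probability at least $1-\gamma$); then for every $\tau\in B(\alpha)$, $d_{R,\lambda}(\probmap(x+\tau),\probmap(x))\leq\epsilon$, and the data processing inequality applied to $\mu_1=\probmap(x+\tau)$, $\mu_2=\probmap(x)$ gives $d_{R,\lambda}(M'(x+\tau),M'(x))\leq\epsilon$ as well. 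Therefore $\{\exists\tau\in B(\alpha):\ d_{R,\lambda}(M'(x+\tau),M'(x))>\epsilon\}$ is contained in the original bad event, and its probability is $\leq\gamma$. Hence $M'$ is $d_{R,\lambda}$-$(\alpha,\epsilon,\gamma)$ robust.

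The only genuinely non-routine step is the data processing inequality for Renyi divergence itself; everything else is bookkeeping. An alternative to the Jensen argument is to observe that the conditional expectation of $h=g_1/g_2$ with respect to the $\sigma$-algebra $\sigma(\rho)$ is exactly the likelihood ratio of the push-forward measures, and then appeal to the conditional Jensen inequality applied to the convex map $t\mapsto t^\lambda$; this is the shortest self-contained derivation and is what I would include in the proof.
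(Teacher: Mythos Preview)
Your proposal is correct and follows essentially the same route as the paper: identify $M'(x)=\rho\#\probmap(x)$, recognize that the Radon--Nikodym derivative of the push-forwards composed with $\rho$ equals the conditional expectation $\mathbb{E}\!\left[\tfrac{d\mu_1}{d\mu_2}\,\big|\,\rho^{-1}(\mathcal{F}_{\mathcal{Y}'})\right]$, and apply Jensen's inequality for the convex map $t\mapsto t^\lambda$. The only difference is that you spell out the final ``bad event containment'' step to pass from the Renyi DPI to $(\alpha,\epsilon,\gamma)$-robustness, which the paper leaves implicit.
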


\begin{proof}
Let us consider $\probmap$ a $d_{R,\lambda}$-$(\alpha,\epsilon,\gamma)$ robust algorithm. Let us also take $x\in \mathcal{X}$, and $\tau \in B(\alpha)$.
Without loss of generality, we consider that $\probmap(x)$, and $\probmap(x+\tau)$ are dominated by the same measure $\mu$.
Finally let us take $\rho$ a measurable mapping from $(\mathcal{Y},\mathcal{F}_{\mathcal{Y}})$ to $(\mathcal{Y'},\mathcal{F}_{\mathcal{Y'}})$. For the sake of readability we denote $\probmap(x) =\nu_1$ and $M(x +\tau)=\nu_2$ (therefore $M'(x)=\rho \#\nu_1$, and $M'(x +\tau)=\rho \#\nu_2$).

Since $\mu >> \nu_1, \nu_2$, one has $\rho \# \mu >> \rho \# \nu_1, \rho \#\nu_2$. Hence one has 
 \begin{align*}
     d_{R,\lambda}(\rho \# \nu_1,\rho \# \nu_2) &= \frac{1}{\lambda - 1} \log \int_{\mathcal{Y}} \left(\frac{d\rho \# \nu_1}{d \rho \# \mu}\right)^{\lambda} \left(\frac{d\rho \# \nu_2}{d \rho \# \mu}\right)^{1-\lambda} d\rho \# \mu = \frac{1}{\lambda - 1} \log \int_{\mathcal{Y}} \left(\frac{d\rho \# \nu_1}{d \rho \# \nu_2}\right)^{\lambda} d\rho \# \nu_2\\
     \intertext{ Simply using the transfer theorem, one gets}
      d_{R,\lambda}(\rho \# \nu_1,\rho \# \nu_2) &= \frac{1}{\lambda - 1} \log \int_{\mathcal{Y'}} \left(\frac{d\rho \# \nu_1}{d \rho \# \nu_2} \circ \rho \right)^{\lambda} d\nu_2\\
     \intertext{Since $\left(\frac{d\rho \# \nu_1}{d \rho \# \nu_2} \circ \rho \right)= \mathbb{E}\left(\frac{d\nu_{1}}
    {d\nu_{2}}\big |\rho^{-1}(\mathcal{F}_{\mathcal{Y}})\right)$ one easily gets the following:}
    d_{R,\lambda}(\rho \# \nu_1,\rho \# \nu_2) & = \frac{1}{\lambda - 1} \log \int_{\mathcal{Y'}} \left(\frac{d\rho \# \nu_1}{d \rho \# \nu_2} \circ \rho \right)^{\lambda} d\nu_2 =  \frac{1}{\lambda - 1} \log \int_{\mathcal{Y'}} \mathbb{E}\left(\frac{d\nu_{1}}
    {d\nu_{2}}\big |\rho^{-1}(\mathcal{F}_{\mathcal{Y}})\right)^{\lambda} d\nu_{2}\\
    \intertext{Finally, by using the Jensen inequality, and the property of the conditional expectation, one has}
    d_{R,\lambda}(\rho \# \nu_1,\rho \# \nu_2)& \leq \frac{1}{\lambda - 1} \log \int_{\mathcal{Y'}} \mathbb{E}\left(\frac{d\nu_{1}}
    {d\nu_{2}}^{\lambda}\big |\rho^{-1}(\mathcal{F}_{\mathcal{Y}})\right) d\nu_{2}= \frac{1}{\lambda - 1} \log \int_{\mathcal{Y'}} \frac{d\nu_{1}}
    {d\nu_{2}}^{\lambda} d\nu_{2} = d_{R,\lambda}(\nu_{1},\nu_{2}).
 \end{align*}
\end{proof}

\subsection{Proof of Theorem~\ref{thm:netrob-appendix}}

\begin{lemma}
\label{thm:exprob-appendix}

Let $\psi:\ \mathcal{X} \rightarrow \mathbb{R}^{n}$ be a mapping. For any $\alpha\geq0$ and for any norms $||.||_A$ and $||.||_B$, one can define $\Delta^{A,B}_{\alpha}(\psi):=\sup\limits_{ x,y \in \mathcal{X}, ||x-y||_A \leq \alpha} || \psi(x) - \psi(y) ||_B$. Let $X$ be a random variable. We denote $\probmap(x)$ the probability measure of the random variable $\psi(x)+X$. 
\begin{itemize}
    \item If $X\sim E_{F}(\theta,t,k)$  where $t$ and $k$ have non-decreasing modulus of continuity $\omega_t$ and $\omega_k$. 

Then for any $\alpha \geq 0$, $\probmap$ defines a probabilistic mapping that is $d_{R,\lambda}$-$(\alpha,\epsilon)$ robust
with $\epsilon = ||\theta||_2 \omega^{B,2}_t(\Delta^{A,B}_{\alpha}(\phi)) +\omega_k^{B,1}(\Delta^{A,B}_{\alpha}(\phi)) $ where $||.||_2$ is the norm corresponding to the scalar product in the definition of the exponential family density function and $||.||_1$ is here the absolute value on $\mathbb{R}$.
\item If $X$ is a centered Gaussian random variable with a non degenerated matrix parameter $\Sigma$. Then for any $\alpha \geq 0$, $\probmap$ defines a probabilistic mapping that is $d_{R,\lambda}$-$(\alpha,\epsilon)$ robust
with $ \epsilon = \frac{\lambda \Delta^{A,2}_{\alpha}(\phi)^2 }{2 \sigma_{min}(\Sigma) } $ where $||.||_2$ is the canonical Euclidean norm on $\mathbb{R}^n$.
\end{itemize}
\end{lemma}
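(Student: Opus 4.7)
The strategy is to compute directly the Renyi divergence between the two shifted measures $\probmap(x)=\psi(x)+X$ and $\probmap(x+\tau)=\psi(x+\tau)+X$ for $\|\tau\|_A\leq\alpha$, and to bound it uniformly by the quantity $\epsilon$ in the statement. In the main theorem, $\psi$ corresponds to the truncated network $\mathcal{N}_{|i}$; the composition with the remaining layers $\phi^n\circ\cdots\circ\phi^{i+1}$ is a deterministic post-processing, so Proposition~\ref{prop::postprocessing-appendix} lets us transfer the Renyi-robustness from the intermediate representation to the network output for free. Thus the whole content lies in analyzing the shifted noise distributions.

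\textbf{Exponential family case.} By a change of variable, $\psi(x)+X$ has density $g_{x}(z)=\exp\{\langle t(z-\psi(x)),\theta\rangle - u(\theta) + k(z-\psi(x))\}$ with respect to the dominating measure, and analogously for $x+\tau$. The log-ratio telescopes nicely:
\begin{equation*}
\log\frac{g_{x}(z)}{g_{x+\tau}(z)}=\langle t(z-\psi(x))-t(z-\psi(x+\tau)),\theta\rangle + k(z-\psi(x))-k(z-\psi(x+\tau)).
\end{equation*}
Applying Cauchy--Schwarz to the first term and the modulus of continuity definitions to both terms, and using that $\omega_t$ and $\omega_k$ are non-decreasing together with $\|\psi(x)-\psi(x+\tau)\|_B\leq \Delta^{A,B}_\alpha(\psi)$, I obtain pointwise
\begin{equation*}
\log\frac{g_{x}(z)}{g_{x+\tau}(z)}\leq \|\theta\|_2\,\omega^{B,2}_t(\Delta^{A,B}_\alpha(\psi))+\omega^{B,1}_k(\Delta^{A,B}_\alpha(\psi))=:\epsilon.
\end{equation*}
The key algebraic trick then is to rewrite $\int (g_x/g_{x+\tau})^\lambda g_{x+\tau}\,d\nu=\int (g_x/g_{x+\tau})^{\lambda-1} g_x\, d\nu\leq e^{(\lambda-1)\epsilon}\int g_x\,d\nu = e^{(\lambda-1)\epsilon}$, so that $d_{R,\lambda}(\probmap(x),\probmap(x+\tau))\leq \epsilon$ as required.

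\textbf{Gaussian case.} Here $\probmap(x)=\mathcal{N}(\psi(x),\Sigma)$. I invoke the closed-form expression for the Renyi divergence between two Gaussians with identical covariance,
\begin{equation*}
d_{R,\lambda}\bigl(\mathcal{N}(\mu_1,\Sigma),\mathcal{N}(\mu_2,\Sigma)\bigr)=\frac{\lambda}{2}(\mu_1-\mu_2)^{\top}\Sigma^{-1}(\mu_1-\mu_2),
\end{equation*}
which is a standard computation (completing the square in the Gaussian integrand), and then bound the quadratic form by its smallest-eigenvalue estimate $(\mu_1-\mu_2)^\top\Sigma^{-1}(\mu_1-\mu_2)\leq \sigma_{\min}(\Sigma)^{-1}\|\mu_1-\mu_2\|_2^2$. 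Plugging $\mu_1-\mu_2=\psi(x)-\psi(x+\tau)$ and using $\|\psi(x)-\psi(x+\tau)\|_2\leq \Delta^{A,2}_\alpha(\psi)$ yields the announced bound $\epsilon=\lambda \Delta^{A,2}_\alpha(\psi)^2/(2\sigma_{\min}(\Sigma))$.

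\textbf{Expected obstacles.} The analysis is mostly mechanical once the right splitting of the log-density ratio is found; the only genuinely delicate step is the uniform-in-$z$ control of the log-ratio in the exponential family case, which works precisely because translation in the argument of $t$ and $k$ is what shifts in input space become, and the modulus of continuity assumption makes this shift controllable independently of $z$. A secondary subtlety is the coherence between the norm $\|\cdot\|_2$ used in the duality pairing $\langle t(\cdot),\theta\rangle$ and the norm on which $\omega_t$ is defined; I will state this carefully at the outset so that the Cauchy--Schwarz step is unambiguous. The Gaussian case is then a specialization whose only non-cosmetic input is the closed-form Renyi formula between Gaussians.
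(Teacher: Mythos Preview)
Your proposal is correct and follows essentially the same route as the paper: for the exponential family case the paper obtains the identical pointwise bound on the log-ratio and then invokes monotonicity of the Renyi divergence in $\lambda$ by passing through $d_{R,\infty}$, which is exactly what your integral manipulation $\int (g_x/g_{x+\tau})^{\lambda-1} g_x\,d\nu\leq e^{(\lambda-1)\epsilon}$ unrolls; for the Gaussian case the paper carries out the complete-the-square computation explicitly to reach the same closed form you cite, and then bounds the quadratic form the same way. Your remarks on post-processing via Proposition~\ref{prop::postprocessing-appendix} to obtain Theorem~\ref{thm:netrob-appendix} also match the paper's derivation.
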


\begin{proof}
Let us consider $\probmap$ the probabilistic mapping constructed from noise injections respectively drawn from 1) an exponential family with non-decreasing modulus of continuity, or 2) a non degenerate Gaussian. Let us take $x\in \mathcal{X}$, and $\tau \in B(\alpha)$.
Without loss of generality, we consider that $\probmap(x)$, and $\probmap(x+\tau)$ are dominated by the same measure $\mu$. Let us also denote, $p_F$ the Radon-Nikodym derivative of the noise drawn in 1) with respect to $\mu$, $p_G$ the Radon-Nikodym derivative of the noise drawn or in 2) with respect to $\mu$ and $ \delta_a$ the Dirac function mapping any element to $1$ if it equals $a$ and to 0 otherwise.

 1)\begin{align*}
    d_{R,\lambda}\left(\probmap(x),\probmap(x+\tau)
    \right)&=d_{R,\lambda}\left(\nu * \delta_{\psi(x)},\nu * \delta_{\psi(x+\tau)}  \right)\\
    &\leq d_{R,\infty}\left(\nu * \delta_{\psi(x)},\nu * \delta_{\psi(x+\tau)}  \right)\\
    &=\log \sup\limits_{z \in \mathbb{R}^{n}} \frac{(p_F * \delta_{\psi(x)})(z)}{(p_F * \delta_{\psi(x+\tau)})(z)}\\
    &=\log \sup\limits_{z \in \mathbb{R}^{n}} \exp(<t(z-\psi(x)) - t(z-\psi(x + \tau)),\theta> \\
    &+ k(z -\psi(x))-k(z-\psi(x +\tau))) \\
    &\leq  \sup\limits_{z \in \mathbb{R}^{n}} ||\theta||_2 ||t(z-\psi(x)) - t(z-\psi(x + \tau))||_2 +|k(z-\psi(x)) - k(z-\psi(x + \tau))|\\
    &\leq ||\theta||_2 \omega^{B,2}_t(||\psi(x+\tau)-\psi(x)||_B) +\omega^{B,1}_k(||\psi(x+\tau)-\psi(x)||_B)\\
    &\leq ||\theta||_2 \omega^{B,2}_t(\Delta^{A,B}_{\alpha}(\psi)) +\omega^{B,1}_k(\Delta^{A,B}_{\alpha}(\psi))
\end{align*}
2)  
\begin{align*}
    & d_{R,\lambda}\left( \probmap(x),\probmap(x+\tau) \right) = \frac{1}{\lambda -1}\log \int_{\mathbb{R}^{n}} \left(p_G * \delta_{\psi(x)}\right)^{\lambda}\times \left(p_G* \delta_{\psi(x+\tau)}\right)^{1-\lambda} d\mu \\\\
   &= \frac{1}{\lambda -1}\log \int_{\mathbb{R}^{n}} \frac{\exp\left\{-1/2\left(\lambda\left(z - \psi(x)\right)^\top\Sigma^{-1}\left(z - \psi(x)\right) + (1-\lambda)\left(z - \psi(x+\tau)\right)^\top\Sigma^{-1}\left(z - \psi(x+\tau)\right)\right)\right\}}{(2 \pi)^{n/2}|\Sigma|^{1/2}}  dz \\
   &= \frac{-\left(\lambda \psi(x)^\top\Sigma^{-1}\psi(x) + (1-\lambda) \psi(x+\tau)^\top\Sigma^{-1}\psi(x+\tau) - \left(\lambda \psi(x) + (1- \lambda) \psi(x+\tau)\right)^\top\Sigma^{-1}\left(\lambda \psi(x) + (1- \lambda) \psi(x+\tau)\right)\right)}{2\lambda -2} \\
    &= \frac{\lambda^{2} - \lambda}{2(\lambda -1)} (\psi(x) - \psi(x+\tau))^\top \Sigma^{-1}(\psi(x) - \psi(x+\tau))\\
    &\leq \frac{\lambda}{2} \sigma_{max}\left(\Sigma^{-1}\right) ||(\psi(x) - \psi(x+\tau))||_2^2 \leq \frac{\lambda \Delta^{A,2}_{\alpha}(\psi)^2 }{ 2 \sigma_{min}(\Sigma)}.
\end{align*}
\end{proof}

\begin{theorem}[Exponential family ensures robustness]
\label{thm:netrob-appendix}
Let us denote $\mathcal{N}_{X}^i(.)=\phi^n\circ...\circ\phi^{i+1}(\mathcal{N}_{|i}(.)+X)$ with $X$ a random variable. Let us also consider $||.||_{A}$, and $||.||_{B}$ two arbitrary norms respectively on $\mathcal{X}$ and on the output space of $\mathcal{N}_{X}^i$.

\begin{itemize}
    \item If $X\sim E_{F}(\theta,t,k)$  where $t$ and $k$ have non-decreasing modulus of continuity $\omega_t$ and $\omega_k$. Then for any $\alpha \geq 0$, $\mathcal{N}_{X}^i(.)$ defines a probabilistic mapping that is $d_{R,\lambda}$-$(\alpha,\epsilon)$ robust with $\epsilon = ||\theta||_2 \omega^{B,2}_t(\Delta^{A,B}_{\alpha}(\phi)) +\omega_k^{B,1}(\Delta^{A,B}_{\alpha}(\phi)) $ where $||.||_2$ is the norm corresponding to the scalar product in the definition of the exponential family density function and $||.||_1$ is here the absolute value on $\mathbb{R}$. The notion of continuity modulus is defined in the preamble of this supplementary material.
    
\item If $X$ is a centered Gaussian\footnote{Although the Gaussian distribution belongs to the exponential family, it does not satisfy the modulus of continuity constraint on $t$ and its robustness has to be proved differently.} random variable with a non degenerated matrix parameter $\Sigma$. Then for any $\alpha \geq 0$, $\mathcal{N}_{X}^i(.)$ defines a probabilistic mapping that is $d_{R,\lambda}$-$(\alpha,\epsilon)$ robust
with $ \epsilon = \frac{\lambda \Delta^{A,2}_{\alpha}(\phi)^2 }{2 \sigma_{min}(\Sigma) } $ where $||.||_2$ is the canonical Euclidean norm on $\mathbb{R}^n$.
\end{itemize}
\end{theorem}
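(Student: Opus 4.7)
The statement is essentially a direct corollary of the two previous results in the appendix: Lemma \ref{thm:exprob-appendix} (which handles the base case of an arbitrary mapping followed by additive noise) and Proposition \ref{prop::postprocessing-appendix} (data processing inequality). The key structural observation is that $\mathcal{N}_X^i$ factors as the composition of a probabilistic mapping $\probmap_0 : x \mapsto \mathcal{N}_{|i}(x)+X$ with the deterministic post-processing $\rho := \phi^n \circ \dots \circ \phi^{i+1}$. Because Renyi divergence is contractive under deterministic push-forwards, all of the robustness will come from the ``noise injection at layer $i$'' part, and the later layers only serve to pass the bound through.

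\textbf{Step 1: Reduce to the setting of Lemma \ref{thm:exprob-appendix}.} Instantiate the lemma with $\psi := \mathcal{N}_{|i}$, norms $\|\cdot\|_A$ on $\mathcal{X}$ and $\|\cdot\|_B$ on the output space of $\mathcal{N}_{|i}$. In the exponential family case (respectively the Gaussian case), the lemma immediately gives that the mapping $\probmap_0 : x \mapsto \mathcal{N}_{|i}(x) + X$ is $d_{R,\lambda}$-$(\alpha,\epsilon)$ robust with exactly the advertised $\epsilon$ (namely $\|\theta\|_2\,\omega_t^{B,2}(\Delta^{A,B}_\alpha(\mathcal{N}_{|i})) + \omega_k^{B,1}(\Delta^{A,B}_\alpha(\mathcal{N}_{|i}))$ or $\lambda\,\Delta^{A,2}_\alpha(\mathcal{N}_{|i})^2 / (2\sigma_{\min}(\Sigma))$ respectively).

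\textbf{Step 2: Post-process through the remaining layers.} The map $\rho = \phi^n \circ \cdots \circ \phi^{i+1}$ is deterministic and measurable, and by construction, if $U \sim \probmap_0(x)$ then $\rho(U) \sim \mathcal{N}_X^i(x)$, so $\mathcal{N}_X^i$ is exactly the push-forward of $\probmap_0$ by $\rho$ in the sense of Proposition \ref{prop::postprocessing-appendix}. Applying that proposition transfers the $d_{R,\lambda}$-$(\alpha,\epsilon)$ robustness from $\probmap_0$ to $\mathcal{N}_X^i$ without any loss in the parameters, which is precisely the claim.

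\textbf{Main obstacle.} There is essentially no new technical content beyond what Lemma \ref{thm:exprob-appendix} and Proposition \ref{prop::postprocessing-appendix} already provide; the only care needed is bookkeeping of norms and a sanity check that the Gaussian case is indeed covered by the second bullet of the lemma (since, as the footnote warns, the Gaussian density does not satisfy the modulus-of-continuity hypothesis on $t$ required in the first bullet, so one must invoke the dedicated Gaussian bound and not the generic exponential-family one). Modulo this distinction, the proof is a two-line application: ``noise at layer $i$ gives Renyi-robustness of the truncated network by Lemma \ref{thm:exprob-appendix}, and the remaining deterministic layers preserve it by Proposition \ref{prop::postprocessing-appendix}.''
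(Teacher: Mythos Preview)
Your proposal is correct and matches the paper's own proof essentially verbatim: the paper also reduces the theorem to Lemma~\ref{thm:exprob-appendix} applied with $\psi=\mathcal{N}_{|i}$, followed by Proposition~\ref{prop::postprocessing-appendix} applied with $\rho=\phi^n\circ\cdots\circ\phi^{i+1}$. Your additional remark about why the Gaussian case must use the dedicated second bullet rather than the generic exponential-family bound is a helpful clarification that the paper leaves implicit.
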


\begin{proof}
This theorem is a direct consequence of Lemma~\ref{thm:exprob-appendix} and Proposition~\ref{prop::postprocessing-appendix}. By applying Lemma~\ref{thm:exprob-appendix} to $\psi=\mathcal{N}_{|i}$ and Proposition~\ref{prop::postprocessing-appendix} to $\rho=\phi^n\circ...\circ\phi^{i+1}$, we immediately get the result.
\end{proof}

\subsection{Proof of Theorem~\ref{thm:bound-appendix}}

\begin{theorem}[Adversarial generalization gap bound in the randomized setting]

\label{thm:bound-appendix}
Let $\probmap$ be the probabilistic mapping at hand. Let suppose that  $\probmap$ is $d_{R,\lambda}$-$(\alpha,\epsilon)$ robust for some $\lambda\geq1$ then:

$$|\advRisk(\probmap)-\Risk(\probmap)|\leq 1-e^{-\epsilon}\mathbb{E}_x\left[e^{-H(\probmap(x))}\right]$$
where $H$ is the Shannon entropy: $H(p)=-\sum_i p_i \log(p_i)$
\end{theorem}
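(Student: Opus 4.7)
The plan is to reduce the theorem to a pointwise inequality at each triple $(x, y, \tau)$ and then integrate. First observe that $\tau = 0$ is admissible, so $\advRisk(\probmap) \geq \Risk(\probmap)$ and the absolute value can be dropped. Writing $p_x := \probmap(x)$ and $q_{x,\tau} := \probmap(x+\tau)$ and unfolding the two risk definitions,
\[
\advRisk(\probmap) - \Risk(\probmap) \;=\; \mathbb{E}_{(x,y)\sim\mathcal{D}}\left[\sup_{\tau\in B(\alpha)}\bigl(p_x(y) - q_{x,\tau}(y)\bigr)\right].
\]
The goal will be to establish, for every $(x, y)$ and every $\tau\in B(\alpha)$, the pointwise bound $p_x(y) - q_{x,\tau}(y) \leq 1 - e^{-\epsilon - H(p_x)}$; since its right-hand side is independent of $y$ and $\tau$, a $\mathcal{D}$-expectation then produces exactly the claimed bound.

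The first ingredient will be a purely algebraic identity: for any two probability vectors $p, q$ on $\mathcal{Y}$ and any class $y_0$,
\[
p(y_0) - q(y_0) \;\leq\; 1 - \sum_{y\in\mathcal{Y}} p(y)\, q(y),
\]
which I would verify in one line by rewriting the difference as $\sum_{y\neq y_0} p(y)(1 - q(y)) + q(y_0)(1 - p(y_0))\geq 0$.

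The second ingredient will be a lower bound on the overlap $\langle p, q\rangle := \sum_y p(y)q(y)$, and this is where the Renyi hypothesis enters. By monotonicity of the Renyi divergence in $\lambda$ for $\lambda\geq 1$, the robustness hypothesis yields $d_{KL}(p_x, q_{x,\tau}) \leq d_{R,\lambda}(p_x, q_{x,\tau}) \leq \epsilon$. Expanding KL and rearranging gives $\sum_y p_x(y)\log q_{x,\tau}(y) \geq -H(p_x) - \epsilon$, and Jensen's inequality applied to the concave logarithm produces
\[
\log \langle p_x, q_{x,\tau}\rangle \;\geq\; \sum_y p_x(y)\log q_{x,\tau}(y) \;\geq\; -\epsilon - H(p_x),
\]
i.e.\ $\langle p_x, q_{x,\tau}\rangle \geq e^{-\epsilon - H(p_x)}$. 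Chaining with the elementary inequality yields the desired pointwise bound, and integration closes the argument.

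The only delicate point is the direction of the Renyi divergence: producing $H(p_x)$ rather than $H(q_{x,\tau})$ in the final bound requires applying KL in the direction $d_{KL}(p_x, q_{x,\tau}) \leq \epsilon$, whereas the paper's robustness definition is stated with the arguments swapped. For all noise distributions handled by Theorem~\ref{thm:netrob} (Gaussian, and exponential-family noise with modulus-of-continuity hypotheses), the derived Renyi bound is symmetric in its two arguments by construction, so the hypothesis delivers both directions with the same $\epsilon$ and no further work is needed.
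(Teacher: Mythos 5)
Your proof is correct and follows essentially the same route as the paper's: both reduce the gap to the overlap $1-\sum_y p_x(y)q_{x,\tau}(y)$ (the paper via the coupling bound $\left|\mathds{1}(y_1\neq y)-\mathds{1}(y_2\neq y)\right|\leq\mathds{1}(y_1\neq y_2)$, you via an equivalent algebraic identity) and then apply Jensen's inequality to get $\sum_y p_x(y)q_{x,\tau}(y)\geq e^{-d_{KL}(p_x,q_{x,\tau})-H(p_x)}$ together with monotonicity of the Renyi divergence in $\lambda$. Your closing remark on the order of arguments in $d_{KL}$ is in fact a point the paper's proof silently elides, and your observation that the $\epsilon$ delivered by Theorem~\ref{thm:netrob} is symmetric in $x$ and $x+\tau$ is a legitimate way to discharge it.
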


\begin{proof}
Let $\probmap$ be a randomized network with a noise $X$ injected at layer $i$. We have:
\begin{align*}
\left|\advRisk(\probmap)-\Risk(\probmap)\right|  &= \left|\mathbb{E}_{(x,y)}\left[ \sup_{\tau/||\tau||\leq\alpha} \mathbb{E}_{y'\sim \probmap(x+\tau)}\left[ \mathds{1}\left( y_1\neq y\right) \right]-  \mathbb{E}_{y_2\sim \probmap(x)}\left[ \mathds{1}\left(y'\neq y\right)\right]\right]\right| \\
&=\left|\mathbb{E}_{(x,y)}\left[ \sup_{\tau/||\tau||\leq\alpha} \mathbb{E}_{y_1\sim \probmap(x+\tau),y_2\sim \probmap(x)}\left[ \mathds{1}\left(y_1\neq y\right)-  \mathds{1}\left(y_2\neq y\right) \right]\right]\right|\\
&\leq\mathbb{E}_{(x,y)}\left[ \sup_{\tau/||\tau||\leq\alpha} \mathbb{E}_{y_1\sim \probmap(x+\tau),y_2\sim \probmap(x)}\left[\left| \mathds{1}\left(y_1\neq y\right)-  \mathds{1}\left(y_2\neq y\right)\right|\right]\right]\\
&\leq\mathbb{E}_{(x,y)}\left[ \sup_{\tau/||\tau||\leq\alpha} \mathbb{E}_{y_1\sim \probmap(x+\tau),y_2\sim \probmap(x)}\left[ \mathds{1}\left(y_1\neq y_2\right)\right]\right]\\
&=\mathbb{E}_{(x,y)}\left[\sup_{\tau/||\tau||\leq\alpha}\mathbb{P}_{y_1\sim \probmap(x+\tau),y_2\sim \probmap(x)}(y_1\neq y_2)\right]
\end{align*}

For two discrete random independent variables of law $P=(p_1,...,p_K)$ and $Q=(q_1,...,q_K)$, thanks to Jensen's inequality: 
$$\mathbb{P}(P=Q)=\sum_{i=1}^K p_i q_i \geq \exp{(\sum_{i=1}^K p_i \log q_i)}=\exp{(-d_{KL}(P,Q)-H(P))}$$

Then we have:
\begin{align*}
\mathbb{E}_{(x,y)}\left[\sup_{\tau/||\tau||\leq\alpha}\mathbb{P}_{y_1\sim \probmap(x+\tau),y_2\sim \probmap(x)}(y_1\neq y_2)\right] &\leq\mathbb{E}_{(x,y)}\left[\sup_{\tau/||\tau||\leq\alpha}1-e^{-d_{KL}(\probmap(x),\probmap(x+\tau))-H(\probmap(x))} \right]\\
&\leq \mathbb{E}_{(x,y)}\left[1-e^{-\epsilon-H(\probmap(x))} \right]\\
&=1-e^{-\epsilon}\mathbb{E}_x\left[e^{-H(\probmap(x))}\right]\\
\end{align*}

\end{proof}

\section{Additional results and discussions}
In this section, we give some additional results on both the strength of the Renyi-divergence and a bound on the generalization gap for TV-distance.

\subsection{About Renyi divergence}
In the main submission, we chose to use the Renyi-robustness as the principled measure of robustness. Since Renyi-divergence is a good surrogate for the trivial distance (which is a generalization of the $0-1$-loss for probabilistic mappings), we supported this statement by showing that Renyi-divergence is stronger than TV-distance.
In this section, we extend this result to most of the classical divergences used in Machine Learning and show that Renyi-divergence is stronger than all of them.   

Let us consider an output space $\mathcal{Y}$, $\mathcal{F}_{\mathcal{Y}}$ a $\sigma$-$ algebra$ over $\mathcal{Y}$, and $\mu_1,\mu_2,\nu$ three measures on $(\mathcal{Y},\mathcal{F}_{\mathcal{Y}})$, with $\mu_1,\mu_2$ in the set of probability measures over $(\mathcal{Y},\mathcal{F}_{\mathcal{Y}})$ denoted $\mathcal{P}(\mathcal{Y})$.
One has $\nu >> \mu_1,\mu_2$ and one denotes $g_1$ and $g_2$ the Radon-Nikodym derivatives with respect to $\nu$.

\textbf{The Separation distance:}
$$d_{S}(\mu_1,\mu_2):=\underset{\{z\} \in \mathcal{F}_{\mathcal{Y}}}{\sup}  1 - \frac{\mu_1(\{z\})}{\mu_2(\{z\})}. $$

\textbf{The Hellinger distance:} 
$$ d_{H}(\mu_1,\mu_2):= \left[ \int_{\mathcal{Y}} \left(\sqrt{g_1} - \sqrt{g_2} \right)^{2} d \nu \right]^{1/2}.$$

\textbf{The Prokhorov metric:} 
$$d_{P}(\mu_1,\mu_2):=\inf\left\{ \zeta >0: \mu_1(B) \leq \mu_2(B^{\zeta}) + \zeta \textnormal{ for all Borel sets } B\right\}
\textnormal{ where } B^{\zeta} = \{x : \inf\limits_{y \in B} d_{\mathcal{Y'}}(x,y)\leq \zeta \}.$$

\textbf{The Discrepancy metric:}
$$d_{D}(\mu_1,\mu_2):=\underset{\textnormal{all closed balls B}}{\sup} \left| \mu_1(B) - \mu_2(B) \right|.$$

\begin{lemma}\label{lemma:separation-appendix}
Given two probability measures $\mu_1$ and $\mu_2$ on $(\mathcal{Y},\mathcal{F}_{\mathcal{Y}})$  the Separation metric and the Renyi divergence satisfy the following relation: $d_S(\mu_1,\mu_2) \leq d_{R,\infty}(\mu_1,\mu_2)$
\end{lemma}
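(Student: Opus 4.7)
The plan is a two-step argument: represent the $\infty$-order R\'enyi divergence explicitly as a logarithm of a supremum of Radon-Nikodym ratios, then reduce the desired inequality to a pointwise scalar inequality.

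First, I would establish the standard representation
$$d_{R,\infty}(\mu_1,\mu_2) = \log \sup_{z \in \mathcal{Y}} \frac{g_1(z)}{g_2(z)}$$
by taking $\lambda \to \infty$ in the defining integral. Writing $d_{R,\lambda}(\mu_1,\mu_2) = \frac{1}{\lambda-1}\log \int (g_1/g_2)^{\lambda} g_2\, d\nu$ and letting $M$ denote the essential supremum of $g_1/g_2$, one factors $M^{\lambda}$ out of the integrand; the remaining factor stays bounded in $[0,1]$ and converges to a strictly positive constant, so the prefactor $\frac{1}{\lambda-1}$ isolates $\log M$ in the limit. If $M = +\infty$ the inequality is vacuous, so one may assume $M < \infty$.

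Second, I would invoke the elementary scalar inequality $1 - r \leq -\log r$ for all $r > 0$, which follows by verifying that $f(r) = r - 1 - \log r$ has derivative $f'(r) = 1 - 1/r$ and therefore attains its unique minimum $f(1) = 0$ at $r = 1$. Applying it pointwise with $r = \mu_1(\{z\})/\mu_2(\{z\}) = g_1(z)/g_2(z)$ at each singleton $\{z\}$ of positive $\mu_2$-mass gives
$$1 - \frac{\mu_1(\{z\})}{\mu_2(\{z\})} \leq \log \frac{g_2(z)}{g_1(z)}.$$
Taking the supremum over $z$ on both sides and combining with the representation of the first step yields the claim.

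The main obstacle I anticipate is measure-theoretic, specifically the interaction between the singleton-based definition of $d_S$ and the essential supremum over all of $\mathcal{Y}$ inherent in $d_{R,\infty}$: the separation distance is meaningful only on the atomic part of the space (otherwise the ratio $\mu_i(\{z\})/\mu_j(\{z\})$ becomes a vacuous $0/0$), so the lemma is fundamentally an atomic/discrete statement in which the identification $\mu_i(\{z\}) = g_i(z)\nu(\{z\})$ lets the pointwise inequality transfer cleanly to the supremum. A secondary subtlety is aligning the direction of the ratio appearing after Step 2 with the direction produced in Step 1; depending on the convention chosen for the arguments of $d_{R,\infty}$, this may require careful bookkeeping of which of $\mu_1$ or $\mu_2$ appears in the numerator of the density ratio, and possibly an appeal to Lemma~\ref{th::PropimpliesRobustness-appendix} to pass between the two ordering conventions.
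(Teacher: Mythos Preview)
Your overall strategy---reduce to a pointwise scalar inequality between $1-r$ and a logarithm, then take a supremum---is exactly the paper's approach. The difference is in which scalar inequality is used and how the direction issue is handled.

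The paper uses $1-x \leq |\ln x|$ (valid for all $x>0$), not $1-r \leq -\ln r$. After taking the supremum over singletons this yields $\sup_{\{z\}} \bigl|\ln \tfrac{\mu_1(\{z\})}{\mu_2(\{z\})}\bigr|$, which the paper then enlarges to $\sup_{Z\in\mathcal{F}_{\mathcal{Y}}}\bigl|\ln \tfrac{\mu_1(Z)}{\mu_2(Z)}\bigr|$ and identifies with $d_{R,\infty}(\mu_1,\mu_2)$. The absolute value is precisely what absorbs the ``direction'' problem you flag.

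Your proposed fix---appeal to Lemma~\ref{th::PropimpliesRobustness-appendix} or to bookkeeping---will not work. Lemma~\ref{th::PropimpliesRobustness-appendix} transfers robustness between comparable divergences; it says nothing about swapping the arguments of an asymmetric divergence. More seriously, with your Step~1 representation $d_{R,\infty}(\mu_1,\mu_2)=\log\sup_z g_1(z)/g_2(z)$ the inequality $d_S\leq d_{R,\infty}$ is simply false: take $\mu_1=(0.99,0.01)$ and $\mu_2=(0.5,0.5)$ on two points, so that $d_S(\mu_1,\mu_2)=1-0.02=0.98$ while $\log\sup g_1/g_2=\log 1.98\approx 0.68$. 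So the direction mismatch is not a bookkeeping nuisance but a genuine obstruction, and the paper's route through $|\ln x|$ (together with its set-based, effectively symmetrized reading of $d_{R,\infty}$) is what makes the argument go through. Replace your inequality $1-r\leq -\ln r$ by $1-r\leq |\ln r|$ and your Step~1 representation by the set-ratio form used in the paper, and your sketch becomes the paper's proof.
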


\begin{proof}
The function $x: \to 1-x - \left|\ln(x)\right|$ is negative on $\mathbb{R}$, therefore for any $\{z\} \in \mathcal{Y}$ one has $1-\frac{\mu_1(\{z\})}{\mu_2(\{z\})} \leq \left|\ln\frac{\mu_1(\{z\})}{\mu_2(\{z\})}\right|$, hence $\sup_{\{z\} \in \mathcal{F}_{\mathcal{Y}}} 1-\frac{\mu_1(\{z\})}{\mu_2(\{z\})} \leq \sup_{\{z\} \in \mathcal{F}_{\mathcal{Y}}}\left|\ln\frac{\mu_1(\{z\})}{\mu_2(\{z\})}\right| \leq \sup_{Z \in \mathcal{F}_{\mathcal{Y}}}\left|\ln\frac{\mu_1(Z)}{\mu_2(Z)}\right| = d_{R,\infty}(\mu_1,\mu_2)$
\end{proof}





\begin{theorem}
Let $\probmap$ be the probabilistic mapping, then for all $\lambda >1$ if $\probmap \text{ is }  d_{R,\lambda}\text{-}(\alpha, \epsilon, \gamma)\text{-robust}$ 
the following assertions holds:
\begin{itemize}
\item[(1)] $\probmap$ is $d_{H}\text{-}(\alpha, \sqrt{\epsilon}, \gamma)\text{-robust}.$
\item[(2)] $\probmap$ is $d_{P}\text{-}(\alpha, \epsilon', \gamma)\text{-robust} \textbf{ and } d_{D}\text{-}(\alpha, \epsilon', \gamma)\text{-robust}$, for $\epsilon' = \min \left(\frac{3}{2}\left(\sqrt{1 + \frac{4\epsilon}{9}} - 1\right)^{1/2}, \frac{\exp(\epsilon +1) -1}{\exp(\epsilon +1) +1}\right).$
\item[(3)] $\probmap \text{ is } d_{W}\text{-}(\alpha, \epsilon', \gamma)\text{-robust} \text{ with }\epsilon' = \min \left(\frac{3}{2 diam(\mathcal{Y})}\left(\sqrt{1 + \frac{4\epsilon}{9}} - 1\right)^{1/2}, \frac{\exp(\epsilon +1) -1}{ diam(\mathcal{Y}) (\exp(\epsilon +1) +1)}\right)$.
\item[(4)] if $\lambda =\infty$, $ \probmap \text{ is } d_{S}\text{-}(\alpha, \epsilon, \gamma)\text{-robust}.$
\end{itemize}
\end{theorem}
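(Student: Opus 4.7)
The plan is to recognize that each of the four claims has exactly the same shape: find a non-decreasing function $\phi$ such that the target distance is bounded by $\phi$ applied to either Renyi, KL, or TV, then invoke the transfer lemma (Lemma~\ref{th::PropimpliesRobustness-appendix}). Since $d_{R,\lambda}$ is non-decreasing in $\lambda$ and $d_{KL}=\lim_{\lambda\to 1^+}d_{R,\lambda}\le d_{R,\lambda}$ for $\lambda\ge 1$, a Renyi-robustness bound automatically gives a KL-robustness bound, and Proposition~\ref{prop:RobustTV-appendix} then gives TV-robustness with the stated $\epsilon'$.

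For (1), the key inequality is the classical Hellinger--KL bound $d_H^2(\mu_1,\mu_2)\le d_{KL}(\mu_1,\mu_2)$, which together with monotonicity in $\lambda$ yields $d_H\le \sqrt{d_{R,\lambda}}$. Applying the transfer lemma with $\phi(x)=\sqrt{x}$ gives Hellinger-robustness with parameter $\sqrt{\epsilon}$. For (2), both the Prokhorov metric and the discrepancy metric are dominated by total variation: $d_P(\mu_1,\mu_2)\le d_{TV}(\mu_1,\mu_2)$ (a standard consequence of choosing $\zeta=d_{TV}$ in the Prokhorov infimum), and $d_D(\mu_1,\mu_2)\le d_{TV}(\mu_1,\mu_2)$ (since closed balls lie in $\mathcal{F}_{\mathcal{Y}}$). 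Composing with Proposition~\ref{prop:RobustTV-appendix} gives the claimed bounds. For (3), the Wasserstein distance with the underlying metric bounded by $\mathrm{diam}(\mathcal{Y})$ satisfies $d_W(\mu_1,\mu_2)\le \mathrm{diam}(\mathcal{Y})\cdot d_{TV}(\mu_1,\mu_2)$ by the Kantorovich--Rubinstein duality (or directly by bounding the transport cost by the diameter on the off-diagonal mass, whose total weight is exactly $d_{TV}$). Dividing through by $\mathrm{diam}(\mathcal{Y})$ produces the scaled bound, and the transfer lemma again delivers the result. For (4), Lemma~\ref{lemma:separation-appendix} already establishes $d_S\le d_{R,\infty}$ pointwise on pairs of measures, so the transfer lemma with $\phi=\mathrm{id}$ concludes.

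The main obstacle is really just bookkeeping: verifying that each auxiliary inequality can be phrased as a non-decreasing $\phi$ relating the two divergences uniformly over all pairs $(\mu_1,\mu_2)$, so that Lemma~\ref{th::PropimpliesRobustness-appendix} applies verbatim. The only slightly non-trivial point is handling $\lambda$: for (1), (2), (3) we need to reduce to KL (hence to $d_{R,\lambda}$ for $\lambda>1$) via monotonicity, while (4) genuinely requires $\lambda=\infty$ because Lemma~\ref{lemma:separation-appendix} is an $\infty$-order statement. No new analytical estimates beyond the classical ones cited above are required.
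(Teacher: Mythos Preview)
Your proposal is correct and follows essentially the same route as the paper's own proof: each item is obtained by citing the corresponding classical divergence inequality ($d_H^2\le d_{KL}$; $d_D,d_P\le d_{TV}$; $d_W\le \mathrm{diam}(\mathcal{Y})\,d_{TV}$; $d_S\le d_{R,\infty}$) and then invoking Lemma~\ref{th::PropimpliesRobustness-appendix}, with Proposition~\ref{prop:RobustTV-appendix} supplying the Renyi-to-TV step for (2) and (3). The only cosmetic difference is that you make the reduction $d_{KL}\le d_{R,\lambda}$ via monotonicity in $\lambda$ explicit, whereas the paper leaves it implicit.
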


\begin{proof}
\begin{itemize}
\item[ ]
\item[(1)] The proof is a simple adaptation of Proposition~\ref{prop:RobustTV-appendix} using the inequality $d_H(\mu_1,\mu_2)^{2} \leq d_{KL}(\mu_1,\mu_2)$~\cite{AGibbsMetrics2002} and Lemma~\ref{th::PropimpliesRobustness-appendix}.\\
\item[(2)] Using the inequalities  $d_D(\mu_1,\mu_2) \leq d_{TV}(\mu_1,\mu_2)$~\cite{AGibbsMetrics2002} and  $d_P(\mu_1,\mu_2) \leq d_{TV}(\mu_1,\mu_2)$~\cite{huber2011robust}, the proof is immediate, using Theorem~\ref{th::PropimpliesRobustness-appendix} and Lemma~\ref{th::PropimpliesRobustness-appendix}.
\item[(3)] The proof is adapted from Proposition~\ref{prop:RobustTV-appendix} using the inequality $d_W(\mu_1,\mu_2) \leq diam(\mathcal{Y}) d_{TV}(\mu_1,\mu_2)$~\cite{AGibbsMetrics2002}.
\item[(4)]The result is a straightforward application of Lemma~\ref{lemma:separation-appendix}, and Lemma~\ref{th::PropimpliesRobustness-appendix}.
\end{itemize}
\end{proof}

\subsection{Generalization gap with TV-robustness}
In the main paper, we give a bound on the generalization gap based on the Renyi-robustness. We extend this result to the TV-robustness, highlighting the fact that generalization gap could be derived from any of the above divergences.

\begin{theorem}

\label{thm:boundTV-appendix}
Let $\probmap$ be the probabilistic mapping at hand. Let us suppose that  $\probmap$ is $d_{TV}$-$(\alpha,\epsilon)$ robust then:
$$|\advRisk(\probmap)-\Risk(\probmap)|\leq 1-(\mathbb{E}_x\left[e^{-H_c(\probmap(x))}\right]-\epsilon)$$
where $H_c$ is the collision entropy: $H_c(p)=-\log(\sum_i p_i^2)$
\end{theorem}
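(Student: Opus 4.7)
The plan is to follow the same scheme as the proof of Theorem~\ref{thm:bound-appendix}, but replace the Jensen/KL based lower bound on $\mathbb{P}(y_1=y_2)$ by a direct Total Variation based one. The opening reduction is identical: I would write $|\advRisk(\probmap)-\Risk(\probmap)|$ as the absolute value of an expectation, push the absolute value inside via Jensen, and use the elementary bound $|\mathds{1}(y_1\neq y)-\mathds{1}(y_2\neq y)|\leq \mathds{1}(y_1\neq y_2)$ to get
$$|\advRisk(\probmap)-\Risk(\probmap)|\leq \mathbb{E}_{(x,y)}\Bigl[\sup_{\|\tau\|\leq\alpha}\mathbb{P}_{y_1\sim \probmap(x+\tau),\, y_2\sim \probmap(x)}(y_1\neq y_2)\Bigr].$$

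The heart of the proof is then to derive a TV-flavored analog of the inequality $\mathbb{P}(P=Q)\geq e^{-d_{KL}(P,Q)-H(P)}$ used in the Renyi case. Denoting $p_i=\Pr(y_2=i)$ and $q_i=\Pr(y_1=i)$, I would start from the identity
$$\sum_i p_i q_i = \sum_i p_i^2 - \sum_i p_i(p_i-q_i),$$
and partition the index set into $A=\{i: p_i>q_i\}$ and its complement $A^c$. The contribution from $A^c$ is nonpositive, and on $A$ I would bound $p_i\leq 1$ to get $\sum_i p_i(p_i-q_i)\leq \sum_{i\in A}(p_i-q_i)=d_{TV}(p,q)$, where the last equality is the standard positive-part characterization of the total variation distance. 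Substituting back and recognizing $\sum_i p_i^2 = e^{-H_c(\probmap(x))}$ yields
$$\mathbb{P}(y_1=y_2)\;\geq\; e^{-H_c(\probmap(x))} - d_{TV}(\probmap(x),\probmap(x+\tau)).$$

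To conclude, the TV-robustness hypothesis bounds $d_{TV}(\probmap(x),\probmap(x+\tau))\leq \epsilon$ uniformly in $\tau\in B(\alpha)$, hence $\mathbb{P}(y_1\neq y_2)\leq 1-e^{-H_c(\probmap(x))}+\epsilon$; taking expectation in $x$ and rearranging gives exactly the claimed bound.

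The main subtlety (and the one step worth care) is obtaining the tight linear dependence $-\epsilon$ rather than the naive $-2\epsilon$ one gets from $\sum_i|p_i-q_i|=2\,d_{TV}(p,q)$. The saving factor of two comes from only having to control the \emph{positive} part of the signed sum $\sum_i p_i(p_i-q_i)$, combined with $p_i\leq 1$; everything else is a direct adaptation of the proof of Theorem~\ref{thm:bound-appendix}.
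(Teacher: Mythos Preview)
Your proposal is correct and follows essentially the same route as the paper: the same reduction to $\mathbb{P}(y_1\neq y_2)$, the same identity $\sum_i p_i q_i=\sum_i p_i^2-\sum_i p_i(p_i-q_i)$, and the same target bound $\sum_i p_i(p_i-q_i)\leq d_{TV}(P,Q)$. The only cosmetic difference is in justifying that last inequality: the paper uses the pointwise bound $p_i-q_i\leq d_{TV}(P,Q)$ for every $i$ together with $\sum_i p_i=1$, whereas you split into positive and negative parts and use $p_i\leq 1$ on the positive side; both arguments are valid and yield the identical constant.
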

\begin{proof}
For two discrete random independent variables of law $P=(p_1,...,p_K)$ and $Q=(q_1,...,q_K)$, thanks to Jensen's inequality: 
$$\mathbb{P}(P=Q)=\sum_{i=1}^K p_i q_i=\sum_{i=1}^K p_i^2-\sum p_i(p_i-q_i)\geq e^{-H_c(P)}-d_{TV}(P,Q)$$
because, for any $i \in [K]$, $p_i-q_i\leq d_{TV}(P,Q)$.

Then, the proof is a simple adaptation of the model of proof from Theorem~\ref{thm:bound-appendix}.
\end{proof}

\section{Additional empirical evaluation}
Due to space limitations, we had to defer the thorough description of our experimental setup and the results of some additional experiments.
\subsection{Architectures \& Hyper-parameters}

We conduct experiments with 3 different dataset:
\begin{itemize}
    \item CIFAR-10 and CIFAR-100 datasets, which are composed of 50K training samples, $10000$ test samples and respectively 10 and 100 different classes. Images are trained and evaluated with a resolution of 32 by 32 pixels. 
    \item ImageNet dataset, which is composed of $\sim1.2$M training examples, $50K$ test samples and $1000$ classes. Images are trained and evaluated with a resolution of 299 by 299 pixels. 
\end{itemize}

For CIFAR-10 and CIFAR-100 \cite{krizhevsky2009learning}, we used a Wide ResNet architecture \cite{ZagoruykoK16} which is a variant of the ResNet model from \cite{he2016deep}. We used 28 layers with a widen factor of 10. We trained all the networks for 200 epochs, a batch size of 400, dropout 0.3 and Leaky Relu activation with a slope on $\mathbb{R}^-$ of 0.1. We used the cross entropy loss with Momentum 0.9 and  a piecewise constant learning rate of 0.1, 0.02, 0.004 and 0.00008 after respectively 7500, 15000 and 20000 steps. The networks achieve for CIFAR10 and 100 a TOP-1 accuracy of 95.8\% and 79.1\% respectively on test images. 

For ImageNet \cite{imagenet_cvpr09}, we used an Inception ResNet v2 \cite{szegedy2017inception} which is the sate of the art architecture for this dataset and achieved a TOP-1 accuracy of 80\%. For the training of ImageNet, we used the same hyper parameters setting as the original implementation. We trained the network for 120 epochs with a batch size of 256, dropout 0.8, Relu as activation function. All evaluations were done with a single crop on the non-blacklisted subset of the validation set.

\subsection{Evaluation under attack}
We evaluate our models against the strongest possible attacks from the literature using different norms ($\ell_1$, $\ell_2$ and $\ell_\infty$) which are all optimization based attacks. On their guide to evaluate robustness, Carlini et al. \cite{carlini2019evaluating} proposed the three following attacks for each norm:

\paragraph{$\ell_2$ -- Carlini \& Wagner attack and $\ell_1$ -- ElasticNet attack}The $\ell_2$ Carlini \& Wagner attack ($C\&W$) introduced in~\cite{carlini2017towards} is formulated as:
$$\min_{x+r\in \mathcal{X} } c\times||r||_2+g(x+r)$$
where $g$ is a function such that $g(y)\geq 0$ iff $f(y)=l'$ with $l'$ the target class. The authors listed some $g$ functions. we choose the following one:
$$g(x)=\max(F_{k(x)}(x)-\max_{i\neq k(x)}(F_i(x)),-\kappa)$$
where $F$ is the softmax function and $\kappa$ a positive constant.

Instead of using box-constrained L-BFGS~\cite{Szegedy2013IntriguingPO} as in the original attack, the authors use instead a new variable for $x+r$:
$$x+r=\frac{1}{2} (\tanh(w)+1)$$ 
\medbreak
Then a binary search is performed to optimize the constant $c$ and ADAM or SGD for computing an optimal solution.

$\ell_1$ -- ElasticNet attack is an adaptation of $\ell_2$ C\&W attack where the objective is adaptive to $\ell_1$ perturbations:
$$\min_{x+r\in \mathcal{X} } c_1\times\norm{r}_1+c_2\times\norm{r}_2+g(x+r)$$

\paragraph{$\ell_\infty$ -- PGD attack.}
The PGD attack proposed by~\cite{madry2017towards} is a generalization of the iterative FGSM attack proposed in~\cite{kurakin2016adversarial}. The goal of the adversary is to solve the following problem: 
$$\argmax_{\norm{r}_p \leq \epsilon} \mathcal{L}(F_{\theta}(x+r),y) $$
In practice, the authors proposed an iterative method to compute a solution:
$$x^{t+1}=P_{x \oplus r}(x^t+\alpha \sign (\nabla_x\mathcal{L}(F_\theta(x^t),y)))$$
Where $x \oplus r$ is the Minkowski sum between $\{x\}$ and $\{r \text{~s.t.~} \norm{r}_p \leq \epsilon\}$, $\alpha$ a gradient step size, $P_S$ is the projection operator on $S$ and $x^0$ is randomly chosen in $x \oplus r$. 

\subsection{Detailed results on CIFAR-10 and CIFAR-100}

\begin{figure}[htb]
\caption{(a) Impact of the standard deviation of the injected noise on accuracy in a randomized model on CIFAR-100 dataset with a Wide ResNet architecture. (b) and (c) illustration of the guaranteed accuracy of different randomized models with Gaussian (b) and Laplace (c) noises given the norm of the adversarial perturbation.}
\centering
\subfigure[]{
    \includegraphics[width=.3\textwidth]{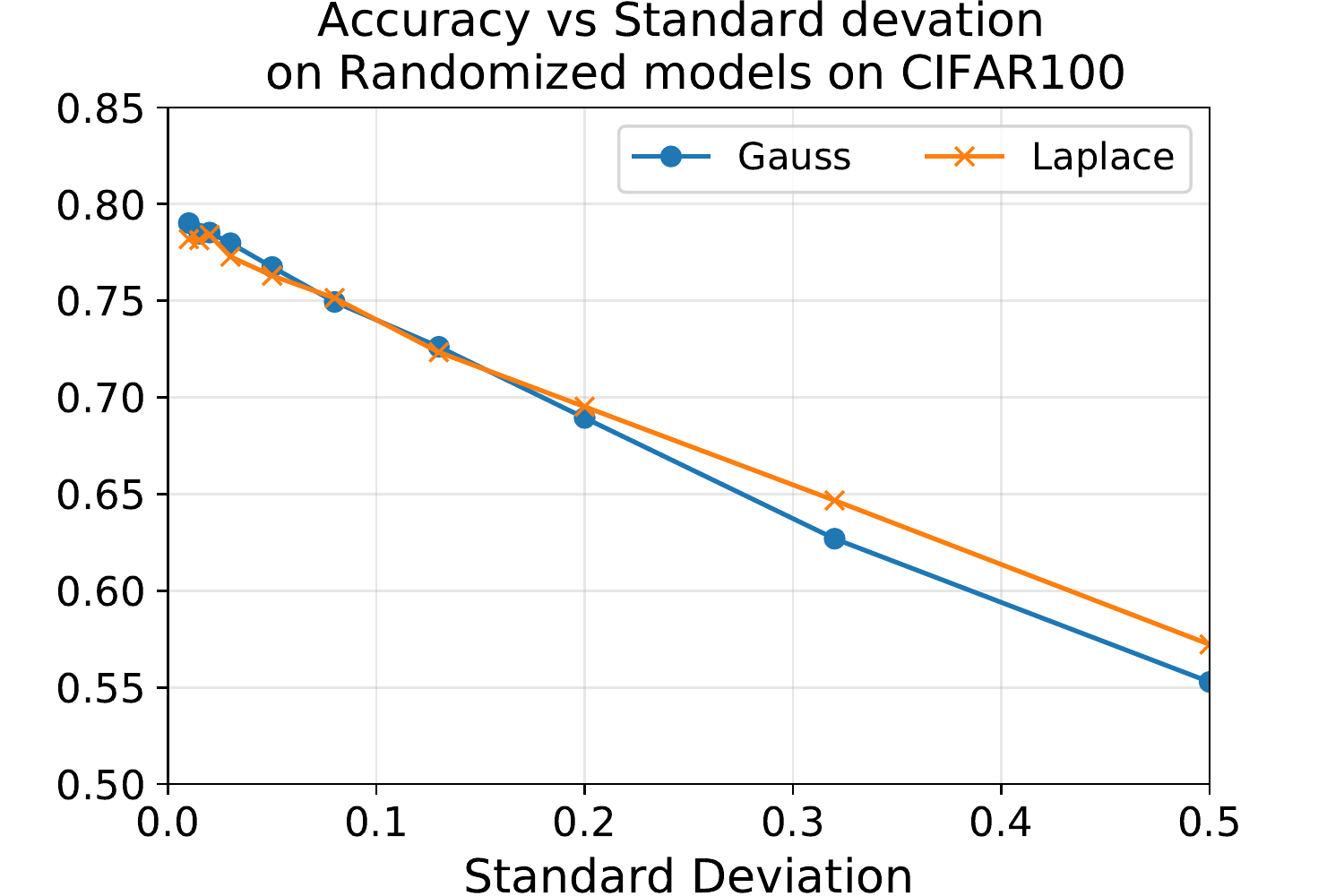}
    \label{fig:acc_sd_CIFAR100-appendix}
    }
\subfigure[]{
    \includegraphics[width=.3\textwidth]{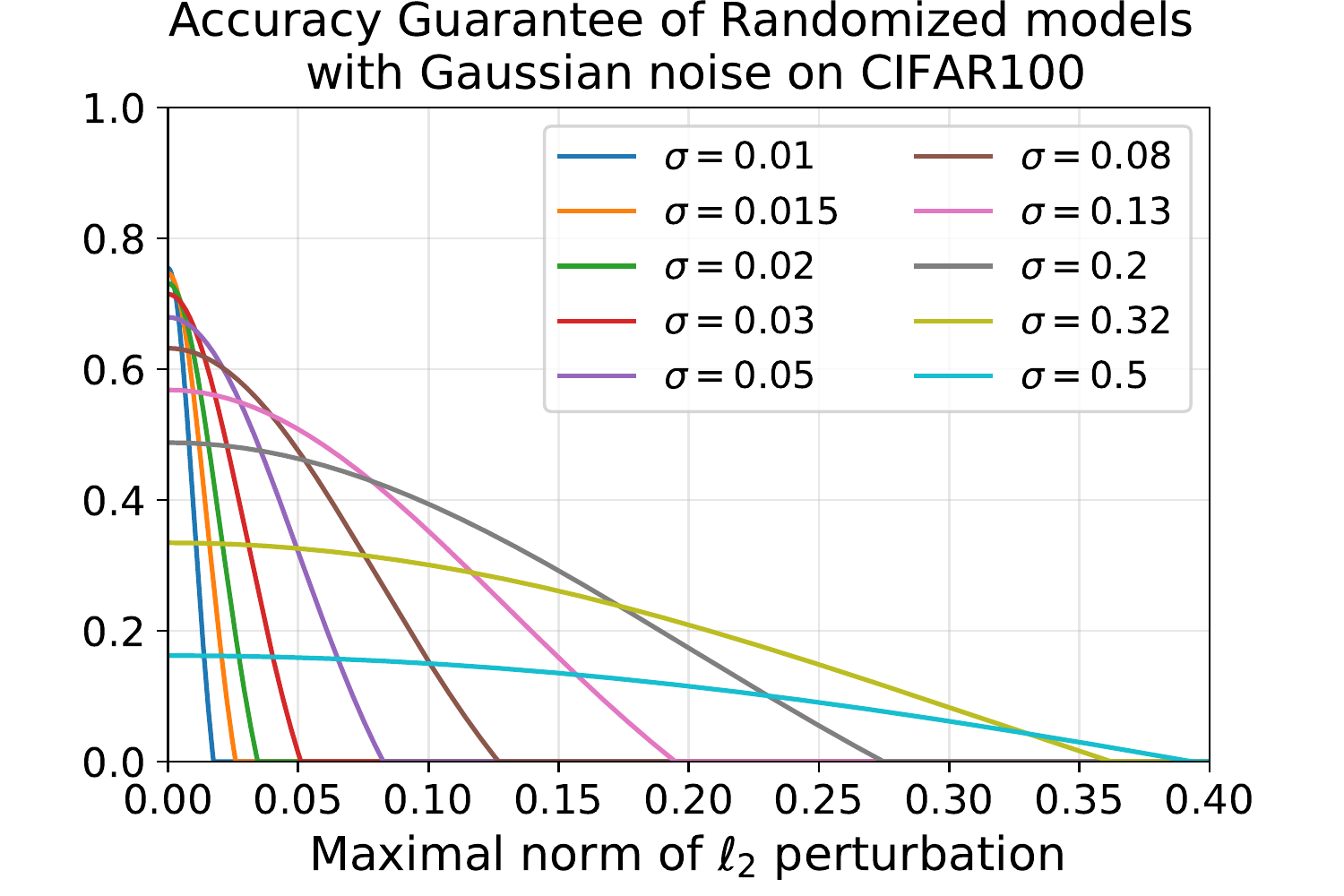}
    \label{fig:gauss_certif_CIFAR100-appendix}
    }
\subfigure[]{
    \includegraphics[width=.3\textwidth]{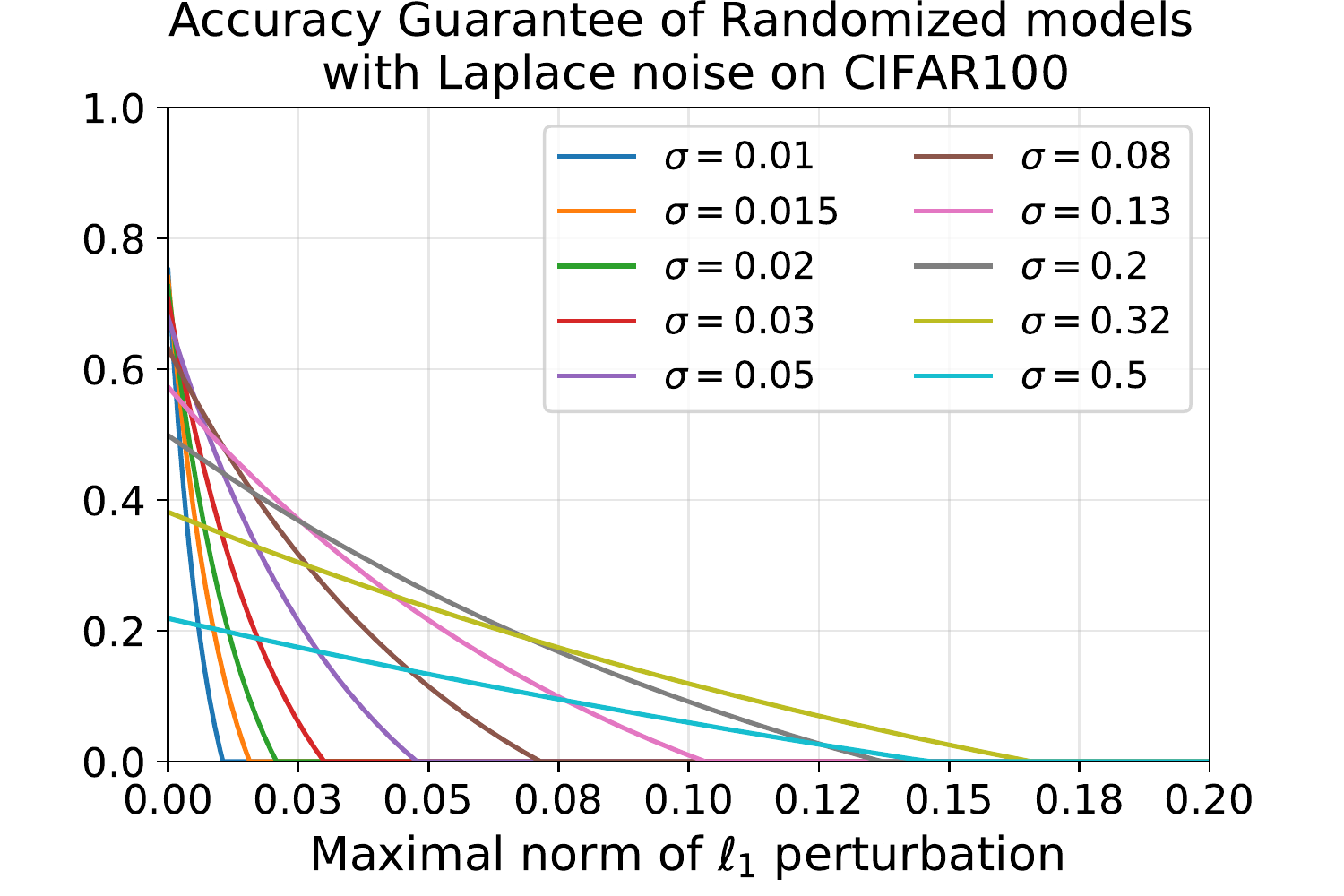}
    \label{fig:laplace_certif_CIFAR100-appendix}
    }
\end{figure}

Figure~\ref{fig:acc_sd_CIFAR100-appendix} presents the trade-off accuracy versus intensity of noise for the CIFAR-100 dataset. As for CIFAR-10, we observe that the accuracy decreases from 0.79 with a small noise (0.01) to $\sim$0.55 with a higher noise (0.5). The Figures~\ref{fig:gauss_certif_CIFAR100-appendix} and \ref{fig:laplace_certif_CIFAR100-appendix} are coherent with the theoretical guarantee of accuracy (Theorem~\ref{thm:bound-appendix}) that the model can achieve under attack with a given perturbation and noise.

Table~\ref{tab:cifar10-appendix} and~\ref{tab:cifar100-appendix} summarize the results on the accuracy and accuracy under attack of CIFAR-10 and CIFAR-100 datasets with a Randomized Wide ResNet architecture given the standard deviation of the injected noise and the number of iterations of the attack. For PGD, we use an epsilon max of 0.06 and a step size of 0.006 for an input space of between -1 and +1. We show that injecting noise empirically helps defending neural networks against adversarial attacks.

\begin{table}[htb]
  \centering
  \tiny
  \caption{Accuracy and Accuracy under attack of CIFAR-10 dataset}
  \label{tab:cifar10-appendix}%
    \begin{tabular}{rcccccccccccc}
    \toprule
          & \multirow{2}[3]{*}{\textbf{Natural}} & \multicolumn{3}{c}{\textbf{$\ell_1$ -- EAD}} &       & \multicolumn{3}{c}{\textbf{$\ell_2$ -- C\&W}} &       & \multicolumn{3}{c}{\textbf{$\ell_\infty$ -- PGD}} \\
\cmidrule{3-5}\cmidrule{7-9}\cmidrule{11-13}          &       & \textbf{20} & \textbf{50} & \textbf{60} &       & \textbf{20} & \textbf{50} & \textbf{60} &       & \textbf{10} & \textbf{15} & \textbf{20} \\
    \textbf{Normal (Sd)} &       &       &       &       &       &       &       &       &       &       &       &  \\
    \midrule
    0.010 & 0.954 &       & 0.208 & 0.193 &       & 0.172 & 0.271 & 0.294 &       & 0.411 & 0.428 & 0.408 \\
    0.050 & 0.950 & 0.265 & 0.347 & 0.367 &       & 0.350 & 0.454 & 0.423 &       & 0.638 & 0.549 & 0.486 \\
    0.130 & 0.931 & 0.389 & 0.401 & 0.411 &       & 0.443 & 0.495 & 0.515 &       & 0.710 & 0.636 & 0.553 \\
    0.200 & 0.913 & 0.411 & 0.456 &       &       & 0.470 & 0.481 & 0.516 &       & \textbf{0.724} & 0.629 & 0.539 \\
    0.320 & 0.876 & 0.442 & 0.450 & 0.445 &       & 0.475 & \textbf{0.522} & 0.499 &       & 0.720 & \textbf{0.641} & 0.566 \\
    0.500 & 0.824 & \textbf{0.453} & \textbf{0.513} & \textbf{0.448} &       & \textbf{0.503} & 0.494 & \textbf{0.523} &       & 0.694 & 0.608 & \textbf{0.587} \\
          &       &       &       &       &       &       &       &       &       &       &       &  \\
    \textbf{Laplace (Sd)} &       &       &       &       &       &       &       &       &       &       &       &  \\
    \midrule
    0.010 & 0.955 & 0.167 & 0.190 & 0.208 &       & 0.184 & 0.279 & 0.313 &       & 0.474 & 0.423 & 0.389 \\
    0.050 & 0.950 & 0.326 & 0.315 & 0.355 &       & 0.387 & 0.458 & 0.448 &       & 0.630 & 0.534 & 0.515 \\
    0.130 & 0.929 & 0.388 & 0.426 & 0.435 &       & 0.461 & \textbf{0.515} & 0.493 &       & 0.688 & 0.599 & 0.538 \\
    0.200 & 0.919 & 0.417 &       & \textbf{0.464} &       & 0.484 & 0.481 & 0.501 &       & 0.730 & 0.600 & 0.569 \\
    0.320 & 0.891 & \textbf{0.460} & 0.443 & 0.448 &       & 0.472 & 0.499 & \textbf{0.520} &       & \textbf{0.750} & \textbf{0.665} & 0.576 \\
    0.500 & 0.846 & 0.454 & \textbf{0.471} & \textbf{0.464} &       & \textbf{0.488} &       & 0.494 &       & 0.721 & 0.650 & \textbf{0.589} \\
          &       &       &       &       &       &       &       &       &       &       &       &  \\
    \textbf{Exponential (Sd)} &       &       &       &       &       &       &       &       &       &       &       &  \\
    \midrule
    0.010 & 0.953 & 0.153 & 0.174 &       &       & 0.228 & 0.292 & 0.306 &       & 0.443 & 0.404 & 0.395 \\
    0.050 & 0.953 & 0.312 & 0.326 & 0.330 &       & 0.343 & 0.468 & 0.435 &       & 0.616 & 0.575 & 0.479 \\
    0.130 & 0.940 & 0.373 & 0.402 & 0.411 &       & 0.424 &       & 0.504 &       & 0.679 & 0.585 & 0.526 \\
    0.200 & 0.936 & 0.394 &       & 0.414 &       & 0.455 & \textbf{0.510} & 0.501 &       & 0.701 & 0.623 & 0.550 \\
    0.320 & 0.919 & \textbf{0.429} & 0.426 & 0.416 &       & \textbf{0.494} & 0.492 & 0.513 &       & 0.739 & 0.638 & 0.564 \\
    0.500 & 0.900 & 0.423 & \textbf{0.454} & \textbf{0.470} &       & 0.488 & 0.494 & \textbf{0.516} &       & \textbf{0.752} & \textbf{0.699} & \textbf{0.594} \\
    \bottomrule
    \end{tabular}%
\end{table}%

\begin{table}[htbp]
  \centering
  \tiny
  \caption{Accuracy and Accuracy under attack of CIFAR-100 dataset.}
  \label{tab:cifar100-appendix}%
    \begin{tabular}{rcccccccccccc}
    \toprule
          & \multirow{2}[3]{*}{\textbf{Natural}} & \multicolumn{3}{c}{\textbf{$\ell_1$ -- EAD}} &       & \multicolumn{3}{c}{\textbf{$\ell_2$ -- C\&W}} &       & \multicolumn{3}{c}{\textbf{$\ell_\infty$ -- PGD}} \\
\cmidrule{3-5}\cmidrule{7-9}\cmidrule{11-13}          &       & \textbf{20} & \textbf{50} & \textbf{60} &       & \textbf{20} & \textbf{50} & \textbf{60} &       & \textbf{10} & \textbf{15} & \textbf{20} \\
    \textbf{Normal (Sd)} &       &       &       &       &       &       &       &       &       &       &       &  \\
    0.010 & 0.790 & 0.235 & 0.234 & 0.228 &       & 0.235 & 0.318 & 0.316 &       & 0.257 & 0.176 & 0.187 \\
    0.050 & 0.768 & 0.321 & 0.294 & 0.320 &       & 0.357 & 0.377 & 0.410 &       & 0.377 & 0.296 & 0.254 \\
    0.130 & 0.726 & \textbf{0.357} & \textbf{0.371} & 0.349 &       & 0.387 & \textbf{0.427} & \textbf{0.428} &       & 0.414 & 0.319 & 0.260 \\
    0.200 & 0.689 & 0.338 & 0.350 & \textbf{0.384} &       & 0.394 & 0.381 &       &       & 0.439 & 0.356 & 0.277 \\
    0.320 & 0.627 & 0.334 & 0.344 & 0.350 &       & 0.328 & 0.364 & 0.400 &       & \textbf{0.441} & 0.366 & 0.299 \\
    0.500 & 0.553 & 0.322 & 0.331 & 0.331 &       & 0.349 & 0.342 & 0.351 &       & 0.408 & \textbf{0.374} & \textbf{0.308} \\
          &       &       &       &       &       &       &       &       &       &       &       &  \\
    \textbf{Laplace (Sd)} &       &       &       &       &       &       &       &       &       &       &       &  \\
    \midrule
    0.010 & 0.782 & 0.199 & 0.227 & 0.243 &       & 0.225 & 0.311 & 0.321 &       & 0.236 & 0.190 & 0.177 \\
    0.050 & 0.763 & 0.326 & 0.317 & 0.331 &       & 0.354 & 0.377 & 0.409 &       & 0.368 & 0.319 & 0.256 \\
    0.130 & 0.723 & 0.337 & 0.357 & 0.344 &       & \textbf{0.408} & 0.414 & 0.408 &       & 0.420 & 0.346 & 0.293 \\
    0.200 & 0.695 & \textbf{0.355} & 0.349 & \textbf{0.361} &       & 0.393 & 0.405 & 0.393 &       & 0.445 & 0.340 & 0.303 \\
    0.320 & 0.647 & 0.324 & \textbf{0.373} & 0.357 &       & 0.388 & 0.387 & 0.373 &       & \textbf{0.460} & 0.381 & 0.303 \\
    0.500 & 0.572 & 0.310 & 0.308 & 0.323 &       & 0.358 & 0.351 & 0.361 &       & 0.425 & \textbf{0.403} & \textbf{0.329} \\
          &       &       &       &       &       &       &       &       &       &       &       &  \\
    \textbf{Exponential (Sd)} &       &       &       &       &       &       &       &       &       &       &       &  \\
    \midrule
    0.010 & 0.785 & 0.218 & 0.251 & 0.217 &       & 0.247 & 0.278 & 0.321 &       & 0.250 & 0.214 & 0.169 \\
    0.050 & 0.767 & 0.323 & 0.337 & 0.317 &       & 0.346 & 0.380 & 0.402 &       & 0.356 & 0.291 & 0.235 \\
    0.130 & 0.749 & 0.330 &       & 0.356 &       & \textbf{0.403} & \textbf{0.444} & \textbf{0.421} &       & 0.400 & 0.328 & 0.266 \\
    0.200 & 0.731 & 0.345 & 0.361 & 0.357 &       & 0.388 & 0.424 & 0.406 &       & 0.427 & 0.340 & 0.267 \\
    0.320 & 0.703 & 0.349 & 0.351 & 0.340 &       & 0.388 & 0.439 & 0.399 &       & 0.433 & 0.351 & 0.280 \\
    0.500 & 0.673 & \textbf{0.387} & \textbf{0.378} & \textbf{0.378} &       & 0.396 & 0.435 &       &       & \textbf{0.485} & \textbf{0.370} & \textbf{0.322} \\
    \bottomrule
    \end{tabular}%
\end{table}%

\subsection{Large scale robustness}
Adversarial training fails to generalize to higher dimensional datasets such as ImageNet. We conducted experiments with the large scale ImageNet dataset and compared our randomized neural network against large scale adversarial training proposed by Kurakin et al.~\cite{kurakin2016adversarial}. One can observe from Table~\ref{tab:adv_imagenet-appendix} that the model from Kurakin et al. is neither robust against recent $\ell_1$ nor $\ell_2$  iterative attacks such as EAD and C\&W. Moreover, it offers a small robustness against $\ell_\infty$ PGD attack. Our randomized neural network with $\EoT$ attacks offers a small robustness on $\ell_1$ and $\ell_2$ attacks while being less robust against PGD. 
\begin{table}[htb]
  \centering
  \caption{Accuracy under attack of the Adversarial model training by Kurakin et al. \cite{kurakin2016adversarial} and an Inception Resnet v2 model training with normal 0.1 noise injected in the image on the ImageNet dataset. }
  \label{tab:adv_imagenet-appendix}
  \centering
  \begin{tabular}{lcccccc}
    \toprule
      & \textbf{Baseline} & \textbf{$\ell_1$ EAD 60} & \textbf{$\ell_2$ C\&W 60} & \textbf{$\ell_\infty$ PGD} \\
    \midrule
    \textbf{Kurakin et al. \cite{kurakin2016adversarial}} & 0.739 & 0.097 & 0.100 & 0.239 \\
    \textbf{Normal 0.1} & 0.625 & 0.255 & 0.301 & 0.061 \\
    \bottomrule
  \end{tabular}
\end{table}

\subsection{Experiments with noise on the first activation}
The aim of the following experiments is empirically illustrate the \textit{Data processing inequality} in Proposition~\ref{prop::postprocessing-appendix}. 

Table~\ref{tab:acc_noise_activation-appendix} and \ref{tab:attack_noise_activation-appendix} present the experiments conducted with the same set of parameters as the previous ones  on CIFAR-10 and CIFAR-100, but with the noise injected in the first activation layer instead of directly in the image. We observe from  Table~\ref{tab:acc_noise_activation-appendix} that we can inject more noise with a marginal loss on accuracy. The accuracy under attack is presented in Table \ref{tab:attack_noise_activation-appendix} for a selection of models. 

\begin{table}[htbp]
  \centering
  \caption{Impact of the distribution and the intensity of the noise with randomized networks with noise injected on the first activation}
  \label{tab:acc_noise_activation-appendix}%
    \begin{tabular}{rcrrcrrc}
    \toprule
    \multicolumn{1}{c}{\textbf{Sd}} & \textbf{Normal} &       & \multicolumn{1}{c}{\textbf{Sd}} & \textbf{Laplace} &       & \multicolumn{1}{c}{\textbf{Sd}} & \textbf{Exponential} \\
\cmidrule{1-2}\cmidrule{4-5}\cmidrule{7-8}    0.01  & 0.956 &       & 0.01  & 0.955 &       & 0.01  & 0.953 \\
    0.23  & 0.943 &       & 0.05  & 0.947 &       & 0.08  & 0.943 \\
    0.45  & 0.935 &       & 0.10  & 0.933 &       & 0.15  & 0.938 \\
    0.68  & 0.926 &       & 0.15  & 0.916 &       & 0.23  & 0.925 \\
    0.90  & 0.916 &       & 0.20  & 0.911 &       & 0.30  & 0.919 \\
    1.00  & 0.916 &       & 0.25  & 0.897 &       & 0.38  & 0.903 \\
    1.34  & 0.906 &       & 0.30  & 0.889 &       & 0.45  & 0.897 \\
    1.55  & 0.900 &       & 0.35  & 0.882 &       & 0.53  & 0.886 \\
    1.77  & 0.893 &       & 0.40  & 0.867 &       & 0.60  & 0.885 \\
    2.00  & 0.886 &       & 0.45  & 0.855 &       & 0.68  & 0.875 \\
    \bottomrule
    \end{tabular}%
\end{table}%

\begin{table}[htbp]
  \begin{adjustwidth}{-1.5cm}{}
  \caption{Accuracy and Accuracy under attack of selected models with noise on the first activation}
   \label{tab:attack_noise_activation-appendix}%
    \begin{tabular}{llcccccccccccc}
    \toprule
          &       &       & \multirow{2}[3]{*}{\textbf{Natural}} & \multicolumn{3}{c}{\textbf{$\ell_1$ -- EAD}} &       & \multicolumn{3}{c}{\textbf{$\ell_2$ -- C\&W}} &       & \multicolumn{2}{c}{\textbf{$\ell_\infty$ -- PGD}} \\
\cmidrule{5-7}\cmidrule{9-11}\cmidrule{13-14}          &       &       &       & \textbf{20} & \textbf{50} & \textbf{60} &       & \textbf{20} & \textbf{50} & \textbf{60} &       & \textbf{10} & \textbf{20} \\
    \textbf{Dataset} & \multicolumn{1}{c}{\textbf{Distribution}} & \textbf{Sd} &       &       &       &       &       &       &       &       &       &       &  \\
    \midrule
    \multirow{3}[1]{*}{CIFAR10} & Normal & 1.55  & 0.900 & 0.441 & 0.440 & 0.413 &       & 0.477 & 0.482 & 0.484 &       & 0.683 & 0.526 \\
          & Laplace & 0.25  & 0.897 & 0.388 & 0.436 & \textbf{0.445} &       & 0.481 & 0.506 & 0.491 &       & 0.664 & 0.493 \\
          & Exponential & 0.38  & \textbf{0.903} & \textbf{0.456} & \textbf{0.463} & 0.438 &       & \textbf{0.495} & \textbf{0.516} & \textbf{0.506} &       & \textbf{0.697} & \textbf{0.557} \\
    \midrule
    \multirow{3}[2]{*}{CIFAR100} & Normal & 0.45  & 0.741 & \textbf{0.362} & 0.352 & 0.353 &       & 0.352 & 0.410 & 0.418 &       & 0.380 & 0.250 \\
          & Laplace & 0.10  & \textbf{0.742} & 0.350 & \textbf{0.367} & 0.350 &       & 0.371 & \textbf{0.419} &       &       & 0.418 & \textbf{0.264} \\
          & Exponential & 0.15  & 0.741 & 0.354 & 0.356 & \textbf{0.373} &       & \textbf{0.394} & 0.409 & \textbf{0.420} &       & \textbf{0.430} & 0.258 \\
    \bottomrule
    \end{tabular}%
    \end{adjustwidth}
\end{table}%

\FloatBarrier

\section{Additional discussions on the experiments}
For the sake of completeness and reproducibility, we give some additional insights on the noise injection scheme and comprehensive details on our numerical experiments.
\subsection{On the need for injecting noise in the training phase}
\label{section::covariateshift-appendix}
Robustness has always been thought as a property to be enforced at inference time and it is tempting to focus only on injecting noise at inference. However, simply doing so ruins the accuracy of the algorithm (as it becomes an instance of distribution shift~\cite{sugiyama2012machine}). Indeed, making the assumption that the training and test distributions matches, in practice, injecting some noise at inference would result in changing the test distribution.

Distribution shift occurs when the training distribution differs from the test distribution. This implies that the hypothesis minimizing the empirical risk is not consistent, i.e. it does not converge to the true model as the training size increases. A way to circumvent that is to ensure that training and test distributions matches using importance weighting (in the case of covariate-shift) or with noise injection in the training phases as well (in our case).

\subsection{Reproducibility of the  experiments} We emphasize that all experiments should be easily reproducible. All our experiments are developed with TensorFlow version 1.12~\cite{tensorflow2015-whitepaper}. The code is available as supplemental material and will be open sourced upon acceptance of the paper. The archive contains a {\em readme} file containing a small documentation on how to run the experiments, a configuration file which defines the architecture and the hyper-parameters of the experiments, python scripts which generate a bash command to run the experiments. The code contains Randomized Wide Resnet used for CIFAR-10 and CIFAR100, Inception Resnet v2 used for ImageNet, PGD, EAD and C\&W attacks used for evaluation under attack. We ran our experiments, on a cluster with computers each having 8 GPU Nvidia V100. 

\end{adjustwidth}

\end{document}